\newif\ifabstract
\newif\iffull
\begin{document}

\title{First Exit Time Analysis of Stochastic Gradient Descent Under Heavy-Tailed Gradient Noise}
\author{Thanh Huy Nguyen$^1$, Umut \c{S}im\c{s}ekli$^1$, Mert G\"urb\"uzbalaban$^2$, Ga\"{e}l Richard$^1$ \vspace{3pt}\footnote{1: LTCI, T\'{e}l\'{e}com Paris, Institut Polytechnique de Paris, France. 2: Department of Management Science and Information Systems, Rutgers University, 100 Rockafellar Road, Piscataway, NJ 08854.}}
\date{}

\maketitle

\noindent
\justify

\begin{abstract}
Stochastic gradient descent (SGD) has been widely used in machine learning due to its computational efficiency and favorable generalization properties. Recently, it has been empirically demonstrated that the gradient noise in several deep learning settings admits a non-Gaussian, heavy-tailed behavior. This suggests that the gradient noise can be modeled by using $\alpha$-stable distributions, a family of heavy-tailed distributions that appear in the generalized central limit theorem. In this context, SGD can be viewed as a discretization of a stochastic differential equation (SDE) driven by a L\'{e}vy motion, and the metastability results for this SDE can then be used for illuminating the behavior of SGD, especially in terms of  `preferring wide minima'. While this approach brings a new perspective for analyzing SGD, it is limited in the sense that, due to the time discretization, SGD might admit a significantly different behavior than its continuous-time limit. Intuitively, the behaviors of these two systems are expected to be similar to each other only when the discretization step is sufficiently small; however, to the best of our knowledge, there is no theoretical understanding on how small the step-size should be chosen in order to guarantee that the discretized system inherits the properties of the continuous-time system. In this study, we provide formal theoretical analysis where we derive explicit conditions for the step-size such that the metastability behavior of the discrete-time system is similar to its continuous-time limit. We show that the behaviors of the two systems are indeed similar for small step-sizes and we identify how the error depends on the algorithm and problem parameters. We illustrate our results with simulations on a synthetic model and neural networks.

\end{abstract}

\newtheorem{proposition}{Proposition}
\newtheorem{remark}{Remark}
\newtheorem{theorem}{Theorem}
\newtheorem{lemma}{Lemma}
\newtheorem{definition}{Definition}
\newtheorem{corollary}{Corollary}
\newtheorem{assumption}{\textbf{A}}

\newcommand{\X}{\mathbf{X}}
\newcommand{\A}{\mathbf{A}}
\newcommand{\B}{\mathbf{B}}
\newcommand{\C}{\mathbf{C}}

\newcommand{\KL}{\mathrm{KL}}
\newcommand{\E}{\mathbb{E}}

\newcommand{\W}{\mathbf{W}}
\newcommand{\Sm}{\mathbf{S}}
\newcommand{\w}{\mathbf{w}}
\newcommand{\h}{\mathbf{h}}
\newcommand{\Hm}{\mathbf{H}}
\newcommand{\sas}{{\cal S} \alpha {\cal S}}

\tableofcontents

\section{Introduction}
\label{sec:intro}
Stochastic gradient descent (SGD) is one of the most popular algorithms in machine learning due to its scalability to large dimensional problems as well as favorable generalization properties. SGD algorithms are applicable to a broad set of convex and non-convex optimization problems arising in machine learning \cite{bottou2012stochastic,bottou2010large}, including deep learning where they have been particularly successful \cite{chaudhari2018stochastic,chaudhari2016entropy,lecun2015deep}. In deep learning, many key tasks can be formulated as the following non-convex optimization problem:
\begin{align}\label{opt-pbm}
\min\nolimits_{w \in \mathbb{R}^d} f(w) = (1/n) \sum\nolimits_{i=1}^n f^{(i)}(w),
\end{align}
where $w\in \mathbb{R}^d$ contains the weights for the deep network to estimate, $f^{(i)}:\mathbb{R}^d \mapsto \mathbb{R}$ is the typically non-convex loss function corresponding to the $i$-th data point, and $n$ is the number of data points \cite{simsekli_tail_ICML2019,jastrzkebski2017three,lecun2015deep}. SGD iterations consist of
\begin{align}\label{eq-sgd}
W^{k+1} = W^k - \eta  \nabla \tilde f_k(W^k), \quad k\geq 0, 
\end{align}
where $\eta$ is the step-size, $k$ denotes the iterations, $W^0\in \mathbb{R}^d$ is the initial point, $ \nabla \tilde f_k(W^k)$ is an unbiased estimator of the actual gradient $\nabla f(W^k)$, estimated from a subset of the component functions $\{f_i\}_{i=1}^n$. In particular, the gradients of the objective are estimated as averages of the form
\begin{align}
\label{eqn:stoch_grad}
\nabla \tilde{f}_{k} (W^k) \triangleq \nabla \tilde{f}_{\Omega_k} (W^k) \triangleq (1/b) \sum\nolimits_{i \in \Omega_k}  \nabla f^{(i)}(W^k),
\end{align} 
where $\Omega_k \subset \{1,\dots,n\}$ is a random subset that is drawn with or without replacement at iteration $k$, and $b = |\Omega_k|$ denotes the number of elements in $\Omega_k$ \cite{bottou2012stochastic}.  

The popularity and success of SGD in practice have motivated researchers to investigate and analyze the reasons behind; a topic which has been an active research area \cite{simsekli_tail_ICML2019,chaudhari2016entropy}. One well-known 
hypothesis \citep{hochreiter1997flat} that has gained recent popularity (see e.g. \cite{chaudhari2016entropy,keskar2016large}) is that among all the local minima lying on the non-convex energy landscape defined by the loss function \eqref{opt-pbm}, local minima that lie on wider valleys generalize better compared to sharp valleys, and that SGD is able to converge to the ``right local minimum" that generalizes better. 
This is visualized in Figure \ref{fig:illustration}(right), where the local minimum on the right lies on a wider valley with width $w_2$ compared to the local minimum on the left with width $w_1$ lying in a sharp valley of depth $h$. Interpreting this hypothesis and 
the structure of the local minima found by SGD clearly requires a deeper understanding of the statistical properties of the gradient noise $Z_k \triangleq \tilde \nabla f(W^k) - \nabla f(W^k)$ and its implications on the dynamics of SGD. A number of papers in the literature argue that the noise has Gaussian structure \cite{mandt2016variational,jastrzkebski2017three,pmlr-v70-li17f,hu2017diffusion,zhu2018anisotropic,chaudhari2018stochastic}. Under the Gaussian noise assumption, the following continuous-time limit of SGD has been considered in the literature to analyze the behavior of SGD:
%
%
    \begin{equation}\label{eqn-brownian-sde} dW(t) = -\nabla f(W(t)) dt + \sqrt{\eta} \sigma dB(t)
    \end{equation}
where $B(t)$ is the standard Brownian motion and $\sigma$ is the noise variance and $\eta$ is the step-size. 
\begin{figure}[t]
    \centering
    \includegraphics[width=0.32\columnwidth]{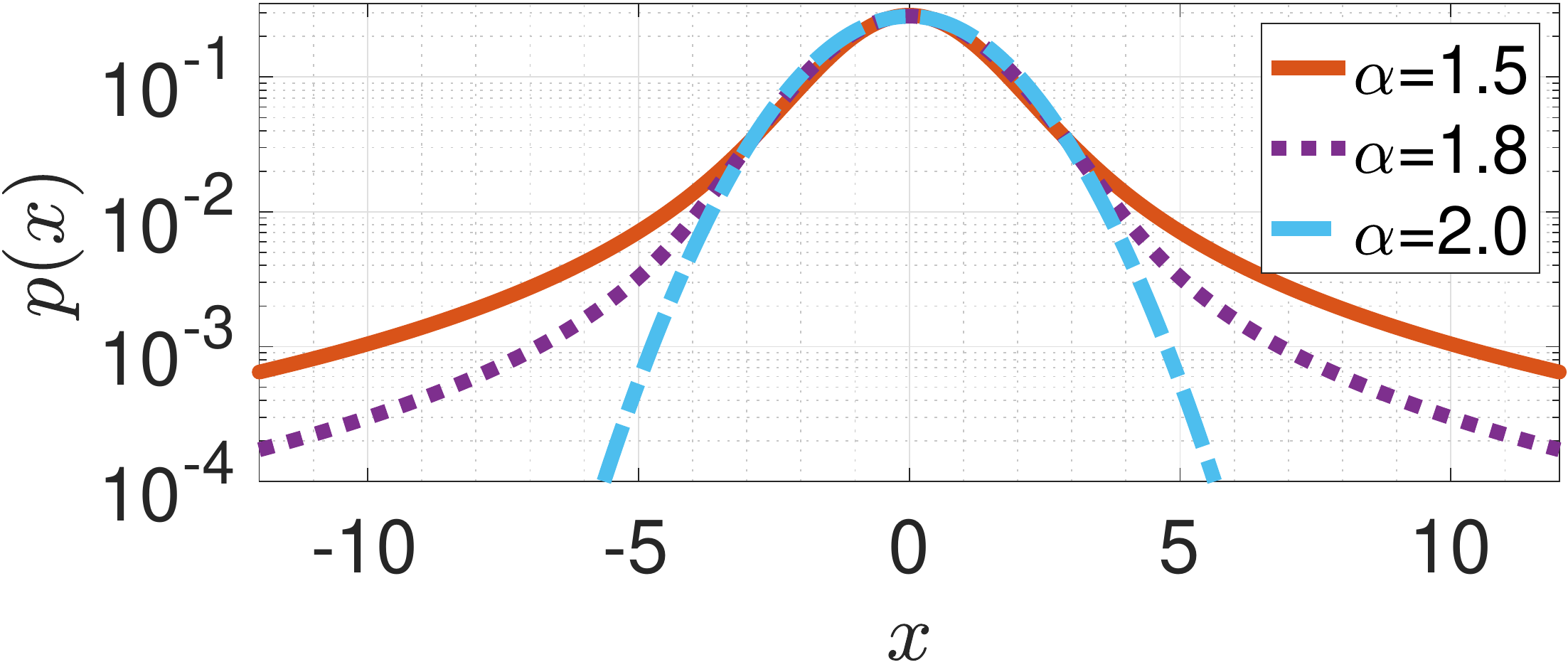}
    \includegraphics[width=0.33\columnwidth]{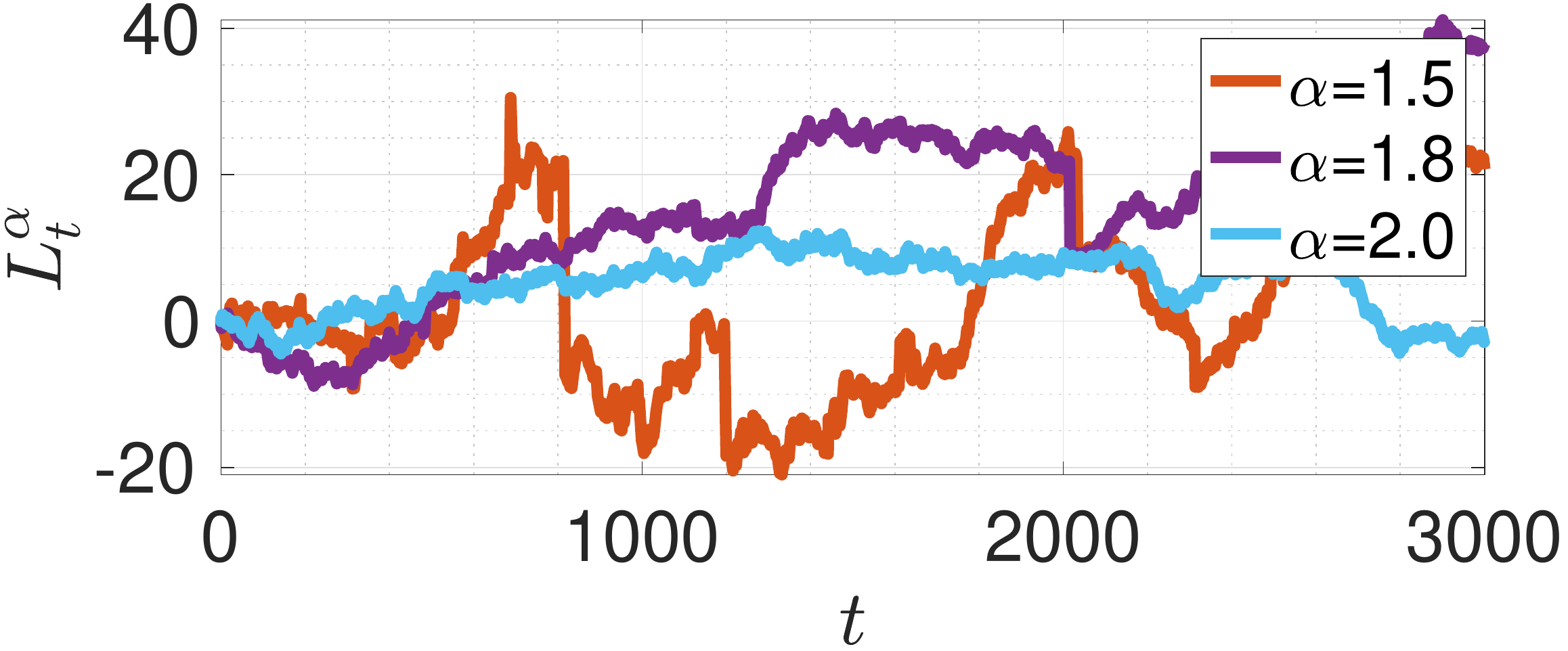}
    \includegraphics[width=0.33\columnwidth]{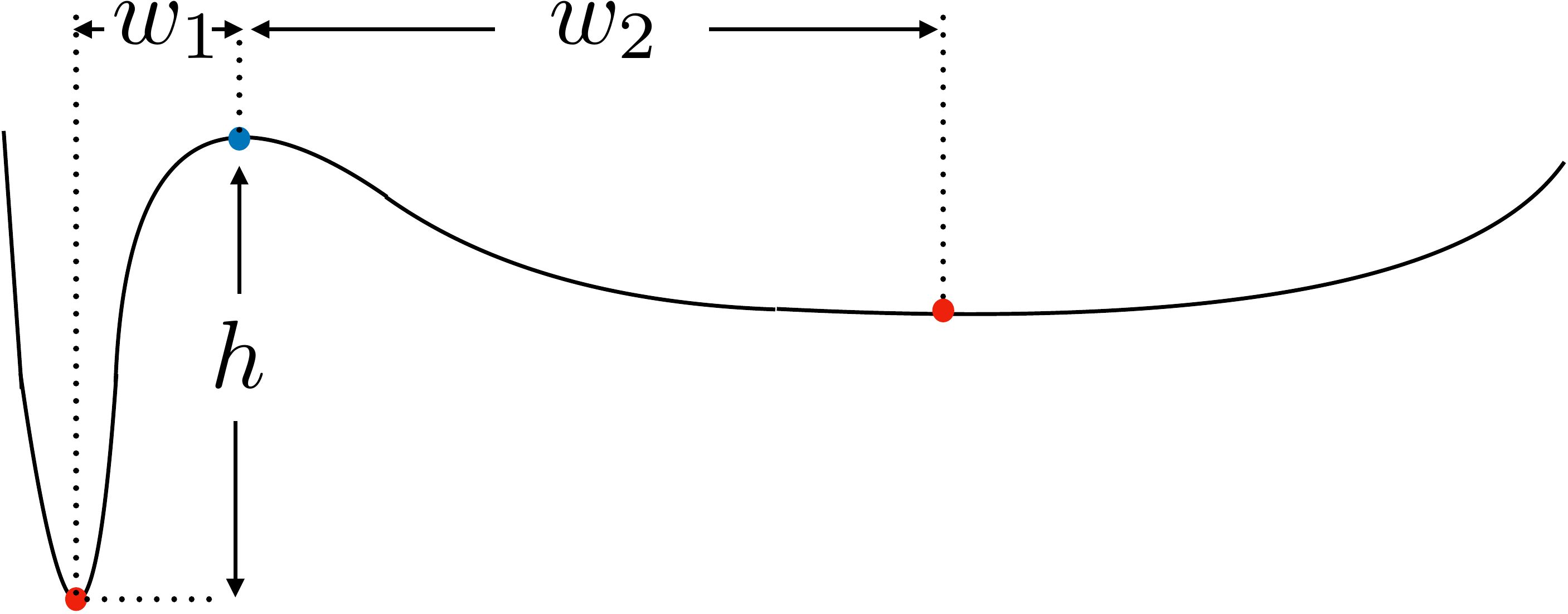}
    \vspace{-5pt}
    \caption{Illustration of $\sas$ (left), $L^\alpha_t$ (middle), wide-narrow minima (right). } 
    \vspace{-10pt}
    \label{fig:illustration}
\end{figure}
The Gaussianity of the gradient noise implicitly assumes that the gradient noise has a finite variance with light tails. In a recent study, \cite{simsekli_tail_ICML2019} empirically illustrated that in various deep learning settings, the gradient noise admits a heavy-tail behavior, which suggests that the Gaussian-based approximation is not always appropriate, and furthermore, the heavy-tailed noise could be modeled by a symmetric $\alpha$-stable distribution ($\mathcal{S}\alpha\mathcal{S}(\sigma)$). Here, $\alpha \in (0,2]$ is called the \emph{tail-index} and characterizes the heavy-tailedness of the distribution and $\sigma$ is a scale parameter that will be formally defined in Section \ref{subsec-sas}. This $\alpha$-stable model generalizes the Gaussian model in the sense that $\alpha=2$ reduces to the Gaussian model, whereas smaller values of $\alpha$ quantify the heavy-tailedness of the gradient noise (see Figure~\ref{fig:illustration}(left)). Under this noise model, the resulting continuous-time limit of SGD becomes \cite{simsekli_tail_ICML2019}:
    \begin{equation}\label{eqn-levy-sde} dW(t) = -\nabla f(W(t)) dt + \eta^{\frac{\alpha-1}{\alpha}}\sigma dL^\alpha(t),
    \end{equation}
where $L^\alpha(t)$ is the $d$-dimensional $\alpha$-stable L\'evy motion with independent components (which will be formally defined in Section~\ref{sec:bg}). This process has also been investigated for global non-convex optimization in a recent study \cite{nguyen2019non}.


The sample paths of the L\'evy-driven SDE \eqref{eqn-levy-sde} have a fundamentally different behavior than the ones of Brownian motion driven dynamics \eqref{eqn-brownian-sde}. This difference is mainly originated by the fact that,  unlike the Brownian motion which has almost surely continuous sample paths, the L\'{e}vy motion can have discontinuities, which are also called `jumps' \cite{oksendal2005applied} (cf.\ Figure~\ref{fig:illustration}(middle)). This fundamental difference becomes more prominent in the metastability properties of the SDE \eqref{eqn-levy-sde}.
The metastability studies consider the case where $W(0)$ is initialized in a basin and analyze the minimum time $t$ such that $W(t)$ \emph{exits} that basin. It has been shown that when $\alpha < 2$ (i.e.\ the noise has a heavy-tailed component), this so called \emph{first exit time} only depends on the \emph{width} of the basin and the value of $\alpha$, and it does not depend on the height of the basin \cite{imkeller2006first,imkeller2010hierarchy,imkeller2010first}. The empirical results in \cite{simsekli_tail_ICML2019} showed that, in various deep learning settings the estimated tail index $\alpha$ is significantly smaller than 2, suggesting that the metastability results can be used as a proxy for understanding the dynamics of SGD in discrete time, especially to shed more light on the hypothesis that SGD prefers wide minima.





While this approach brings a new perspective for analyzing SGD, approximating SGD as a continuous-time approach might not be accurate for any step-size $\eta$, and some theoretical concerns have already been raised for the validity of such approximations \cite{yaida2018fluctuationdissipation}. Intuitively, one can expect that the metastable behavior of SGD would be similar to the behavior of its continuous-time limit only when the discretization step-size is small enough. Even though some theoretical results have been recently established for the discretizations of SDEs driven by Brownian motion \cite{tzen2018local}, it is not clear that how the discretized L\'{e}vy SDEs behave in terms of metastability. 

In this study, we provide formal theoretical analyses where we derive explicit conditions for the step-size such that the metastability behavior of the discrete-time system \eqref{eqn-sgd-discrete-model} is guaranteed to be close to its continuous-time limit \eqref{eqn-general-sde}. More precisely, we consider a stochastic differential equation with both a Brownian term and a L\'evy term, and its Euler discretization as follows \cite{duan}:
    \begin{align}
    \label{eqn-general-sde} 
    \text{d}W(t) &=-\nabla f(W(t-))\text{d}t + \varepsilon\sigma dB(t) + \varepsilon\text{d}L^{\alpha}(t) \\
    \label{eqn-sgd-discrete-model}
W^{k+1} &= W^k-\eta \nabla f(W^k) + \varepsilon\sigma\eta^{1/2}\xi_k + \varepsilon\eta^{1/\alpha}\zeta_k,
    \end{align}
with independent and identically distributed (i.i.d.) variables $\xi_k \sim \mathcal{N}(0, I)$ where $I$ is the identity matrix, the components of $\zeta_k$ are i.i.d with $\mathcal{S}\alpha\mathcal{S}(1)$ distribution, and $\varepsilon$ is the amplitude of the noise. This dynamics includes \eqref{eqn-brownian-sde} and \eqref{eqn-levy-sde} as special cases.
Here, we choose $\sigma$ as a scalar for convenience; however, our analyses could be easily extended to the case where $\sigma$ is a function of $W(t)$.

Understanding the metastability behavior of SGD modeled by these dynamics requires understanding how long it takes for the continuous-time process $W(t)$ given by \eqref{eqn-general-sde} and its discretization $W^k$ \eqref{eqn-sgd-discrete-model} to exit a neighborhood of a local minimum $\bar{w}$, if it is started in that neighborhood.
For this purpose, for any given local minimum $\bar{w}$ of $f$ and $a>0$, we define the following set 
\begin{align}
A\triangleq\Big\{(w^1,\ldots,w^K)\in\mathbb{R}^d\times\ldots\times\mathbb{R}^d:\max_{k\leq K}\Vert w^k-\bar{w}\Vert\leq a\Big\},
\label{eqn:set_A}
\end{align}
which is the set of $K$ points in $\mathbb{R}^d$, each at a distance of at most $a$ from the local minimum $\bar{w}$. 
We formally define the \emph{first exit times}, respectively for $W(t)$ and $W^k$ as follows: 
\begin{align}
\label{eqn:fet_cont} \tau_{\bar{\varepsilon},a}(\varepsilon)&\triangleq\inf\{t\geq 0:\Vert W(t)-\bar{w}\Vert\not\in[0,a+{\bar{\varepsilon}}]\}, \\
\label{eqn:fet_dis} \bar{\tau}_{\bar{\varepsilon},a} (\varepsilon)&\triangleq\inf\{k\in\mathbb{N} \> : \> \Vert W^k-\bar{w}\Vert\not\in[0,a+{\bar{\varepsilon}}]\}.
\end{align}

Our main result (Theorem~\ref{thm:ultimateTheorem}) shows that with sufficiently small discretization step $\eta$, the probability to exit a given neighborhood of the local optimum at a fixed time $t$ of the discretization process approximates that of the continuous process. This result also provides an explicit condition for the step-size, which explains certain impacts of the other parameters of the problem, such as dimension $d$, noise amplitude $\varepsilon$, variance of Gaussian noise $\sigma$, towards the similarity of the discretization and continuous processes. We validate our theory on a synthetic model and neural networks. 


\textbf{Notations.} For $z>0$, the gamma function is defined as $\Gamma(z) \triangleq \int_0^\infty x^{z-1}e^{-x} \mathrm{d}x$. For any Borel probability measures $\mu$ and $\nu$ with domain $\Omega$, the total variation (TV) distance is defined as follows:
    $\Vert \mu-\nu\Vert_{TV}\triangleq2\sup\nolimits_{A \in \mathcal{B}(\Omega) }\vert \mu(A) - \nu(A)\vert$,
where $\mathcal{B}(\Omega)$ denotes the Borel subsets of $\Omega$. 





\section{Technical Background}
\label{sec:bg}

\textbf{Symmetric $\alpha$-stable distributions. }
\label{subsec-sas}
The $\mathcal{S}\alpha\mathcal{S}$ distribution is a generalization of a centered Gaussian
distribution where $\alpha \in (0,2]$ is called the \emph{tail index}, a parameter that determines the amount of heavy-tailedness. We say that $X \sim \mathcal{S}\alpha\mathcal{S}(\sigma)$, if its characteristic function $\mathbb{E} [\mbox{e}^{i \omega X}] = \mbox{e}^{-|\sigma| \omega^\alpha}$ 
where $\sigma\in (0,\infty)$ is called the \textit{scale parameter}. 
In the special case, when $\alpha=2$, $\mathcal{S}\alpha\mathcal{S}(\sigma)$ reduces to the normal distribution $\mathcal{N}(0,2\sigma^2)$. 
A crucial property of the $\alpha$-stable distributions is that, when $X \sim \sas(\sigma)$ with $\alpha <2$, the moment $\mathbb{E}[|X|^p]$ is finite if and only if $p < \alpha$, which implies that $\sas$ has infinite variance as soon as $\alpha < 2$. While the probability density function does not have closed form analytical expression except for a few special cases of $\sas$ (e.g.\ $\alpha=2$: Gaussian, $\alpha=1$: Cauchy), it is computationally easy to draw random samples from it by using the method proposed in \cite{chambers1976method}.

\textbf{L\'evy processes and SDEs driven by L\'evy motions. }
The standard $\alpha$-stable L\'evy motion on the real line is the unique process satisfying the following properties \cite{duan}:
\begin{enumerate}[topsep=0pt]
    \item [$(i)$] For any $0\leq t_1 < t_2 < t_2 < \dots < t_N$, its increments $L_{t_{i+1}}^\alpha- L_{t_{i}}^\alpha$ are independent for $i=1,2,\dots,N$ and $L_0^\alpha = 0$ almost surely.
    \item [$(ii)$] $L_{t-s}^\alpha$ and $L_t^\alpha - L_s^\alpha$ have the same distribution $\mathcal{S}\alpha\mathcal{S}\left( (t-s)^{1/\alpha}\right)$  for any $t>s$.
    \item [$(iii)$] $L_t^\alpha$ is continuous in probability: $\forall\delta>0$ and $s\geq 0$, $\mathbb{P}(|L_t^\alpha - L_s^\alpha|>\delta)\to 0$ as $t\to s$.
\end{enumerate}
When $\alpha=2$, $L_t^\alpha$ reduces to a scaled version of the standard Brownian motion $\sqrt{2}B_t$. Since $L_t^\alpha$ for $\alpha<2$ is only continuous in probability, it can incur a countable number of discontinuities at random times, which makes is fundamentally different from the Brownian motion that has almost surely continuous paths. 

The $d$-dimensional L\'evy motion with independent components is a stochastic process on $\mathbb{R}^d$ where each coordinate corresponds to an independent scalar L\'evy motion. Stochastic processes based on L\'evy motion such as \eqref{eqn-levy-sde} and their mathematical properties have also been studied in the literature, we refer the reader to \cite{tankov2003financial,oksendal2005applied} for details. 



\textbf{First Exit Times of Continuous-Time L\'{e}vy Stable SDEs. }
Due to the discontinuities of the L\'evy-driven SDEs, their metastability behaviors also differ significantly from their Brownian counterparts. In this section, we will briefly mention important theoretical results about the SDE given in \eqref{eqn-general-sde}.

For simplicity, let us consider the SDE \eqref{eqn-general-sde} in dimension one, i.e.\ $d=1$. In a relatively recent study \cite{imkeller2006first}, the authors considered this SDE, where the potential function $f$ is required to have 
a non-degenerate global minimum at the origin, and they proved the following theorem.  
%
\begin{theorem}[\cite{imkeller2006first}]\label{thm-exit-sde}
Consider the SDE \eqref{eqn-general-sde} in dimension $d=1$ and assume that it has a unique strong solution. Assume further that the objective $f$ has a global minimum at zero, satisfying the conditions $f'(x)x\geq0$, $f(0)=0$, $f'(x) = 0$ if and only if $x=0$, and $f''(0) = M >0$. Then, there exist positive constants $\varepsilon_0$, $\gamma$, $\delta$, and $C>0$ such that for $0 < \varepsilon \leq \varepsilon_0$, the following holds:
\begin{align}
\mathrm{e}^{-u \varepsilon^\alpha \frac{\theta}{\alpha} (1+ C \varepsilon^\delta) } (1-  C \varepsilon^\delta)  \leq \mathbb{P}( \tau_{0,a}(\varepsilon) >u ) \leq \mathrm{e}^{-u \varepsilon^\alpha \frac{\theta}{\alpha} (1 - C \varepsilon^\delta) } (1 +  C \varepsilon^\delta)
\end{align}
uniformly for all $x \in [-a + \varepsilon^\gamma , a - \varepsilon^\gamma] $ and $u \geq 0 $, where 
$\theta = \frac2{a^\alpha}$. Consequently, 
\begin{align}\label{eq-tau-eps}
\mathbb{E}[\tau_{0,a}(\varepsilon)] = \frac{\alpha}{2}\frac{a^\alpha}{\varepsilon^\alpha} (1+ \mathcal{O}(\varepsilon^\delta)), \quad \text{ uniformly for all } x \in [-a + \varepsilon^\gamma , a - \varepsilon^\gamma]. 
\end{align}
\end{theorem}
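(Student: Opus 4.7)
The plan is to follow the classical Imkeller--Pavlyukevich strategy for first-exit problems driven by $\alpha$-stable noise: show that, in the small-noise regime $\varepsilon \downarrow 0$, exit from the basin is triggered almost surely by a single large jump of $L^\alpha$, and that the waiting time for such a jump is asymptotically exponential with rate $\theta\varepsilon^\alpha/\alpha$. First, I would perform a L\'evy--It\^o decomposition at a threshold $\varepsilon^{-\rho}$ for some small $\rho>0$ to be tuned later: write $L^\alpha_t = \xi^\varepsilon_t + \eta^\varepsilon_t$, where $\xi^\varepsilon$ contains all jumps of magnitude at most $\varepsilon^{-\rho}$ (compensated to be a martingale) and $\eta^\varepsilon$ is a compound Poisson process with intensity $\beta_\varepsilon = \int_{|x|>\varepsilon^{-\rho}}\nu(dx)$, where $\nu(dx) = c_\alpha |x|^{-1-\alpha}\,dx$ is the L\'evy measure of the symmetric $\alpha$-stable distribution. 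The Pareto tail of $\nu$ then yields $\beta_\varepsilon = \mathcal{O}(\varepsilon^{\alpha\rho})$.

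Second, I would analyze the auxiliary small-noise system $Y^\varepsilon$ obtained by replacing $L^\alpha$ with $\xi^\varepsilon$ in \eqref{eqn-general-sde}. The dissipativity hypotheses $f'(x)x\geq 0$ and $f''(0)=M>0$ imply that the deterministic flow $\dot y = -f'(y)$ contracts exponentially toward the origin on $[-a,a]$. Combining a Gronwall argument with a Bernstein-type concentration inequality for $\xi^\varepsilon$ (whose jumps, after the $\varepsilon$ prefactor, are bounded by $\varepsilon^{1-\rho}$) and Doob's maximal inequality for the Brownian component, I would show that, uniformly for $x\in[-a+\varepsilon^\gamma,a-\varepsilon^\gamma]$, the process $Y^\varepsilon$ enters and remains in $[-\varepsilon^{\gamma'},\varepsilon^{\gamma'}]$ over any time window of length $\mathcal{O}(\beta_\varepsilon^{-1})$ with probability at least $1-C\varepsilon^\delta$. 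The exponent $\rho$ must be chosen to simultaneously keep the accumulated small-jump fluctuation below $\varepsilon^{\gamma'}$ and keep $\beta_\varepsilon^{-1}$ far larger than the contraction time $\mathcal{O}(\log(1/\varepsilon))$, which is what pins down the admissible constants $\gamma,\delta$.

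Third, I would identify the exit rate by Poisson thinning. On the high-probability event from step two that the process lies within $\varepsilon^{\gamma'}$ of the origin at every arrival time $T_k$ of $\eta^\varepsilon$, the first jump that triggers exit is the first jump of $\eta^\varepsilon$ whose magnitude exceeds $(a-\varepsilon^{\gamma'})/\varepsilon$; by the stable-tail asymptotic,
\[
\int_{|x|>(a\pm\varepsilon^{\gamma'})/\varepsilon}\nu(dx) = \frac{2}{\alpha}\,\frac{\varepsilon^\alpha}{a^\alpha}\bigl(1+\mathcal{O}(\varepsilon^\delta)\bigr) = \frac{\theta}{\alpha}\varepsilon^\alpha\bigl(1+\mathcal{O}(\varepsilon^\delta)\bigr),
\]
so $\tau_{0,a}(\varepsilon)$ is stochastically bracketed by exponential random variables with rates $(\theta\varepsilon^\alpha/\alpha)(1\pm C\varepsilon^\delta)$. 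Multiplying each bracketing tail by the probability-$(1-C\varepsilon^\delta)$ factor from step two produces the two-sided bound stated in the theorem, and integrating $\mathbb{P}(\tau_{0,a}(\varepsilon)>u)$ in $u$ yields the expectation formula \eqref{eq-tau-eps}. The main technical obstacle is step two: although a single contraction window is short, one must iterate the estimate over the $\mathcal{O}(\varepsilon^{-\alpha\rho})$ excursions between successive large jumps and accumulate errors using the strong Markov property without breaking the $(1\pm C\varepsilon^\delta)$ approximation. Tuning $\rho$ to balance the truncated-jump variance $\mathcal{O}(\varepsilon^{-\rho(2-\alpha)})$ against the contraction rate and against the tail-expansion remainder is the delicate part of the analysis.
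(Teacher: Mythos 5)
The paper does not prove Theorem~\ref{thm-exit-sde}: it is quoted verbatim as a background result from \cite{imkeller2006first}, so there is no in-paper argument to compare against. Your sketch is a faithful high-level summary of the Imkeller--Pavlyukevich strategy that the cited reference actually uses (L\'evy--It\^o split at a cutoff $\varepsilon^{-\rho}$, control of the reduced dynamics driven by the compensated small jumps plus Brownian noise, Poisson analysis of the arrivals of the large-jump process, and iteration over inter-arrival excursions via the strong Markov property), so on approach it matches. One small imprecision worth flagging: $f'(x)x\geq 0$ together with $f''(0)=M>0$ does not give exponential contraction of the deterministic flow $\dot y=-f'(y)$ uniformly on all of $[-a,a]$; it gives exponential contraction only locally near $0$, plus finite traversal time from any compact annulus $\{\delta\leq |y|\leq a\}$ (because $f'\neq 0$ there by the ``$f'(x)=0$ iff $x=0$'' assumption). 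The relaxation-time estimate $\mathcal{O}(\log(1/\varepsilon))$ that you invoke still holds, but it comes from combining these two regimes rather than from a single uniform Gronwall bound.
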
 
This result indicates that the first exit time of $W(t)$ needs only polynomial time with respect to the \emph{width} of the basin and it does not depend on the depth of the basin, whereas Brownian systems need exponential time in the height of the basin in order to exit from the basin \cite{bovier2004metastability,imkeller2010hierarchy}. 
This difference is mainly due to the discontinuities of the L\'{e}vy motion, which enables it to `jump out' of the basin, whereas the Brownian SDEs need to `climb' the basin due to their continuity. 
Consequently, given that the gradient noise exhibits similar heavy-tailed  behavior to an $\alpha$-stable distributed random variable, this result can be considered as a proxy to understand the wide-minima behavior of SGD.



We note that this result has already been extended to $\mathbb{R}^d$ in \cite{imkeller2010first}. 
Extension to state dependent noise has also been obtained in \cite{pavlyukevich2011first}. We also note that the metastability phenomenon is closely related to the spectral gap of the forward operator  corresponding to the SDE dynamics (see e.g. \cite{bovier2004metastability}) and it is known that this quantity scales like $\mathcal{O}(\varepsilon^\alpha)$ for $\varepsilon$ small which determines the dependency to $\varepsilon$ in the first term of the exit time \eqref{eq-tau-eps} due to Kramer's Law \cite{berglund2011kramers,burghoff2015spectral}.  Burghoff and Pavlyukevich \cite{burghoff2015spectral} showed that similar scaling in $\varepsilon$ for the spectral gap would hold if we were to restrict the SDE dynamics to a discrete grid with a small enough grid size.  



\section{Assumptions and the Main Result}

In this study, our main goal is to obtain an explicit condition on the step-size, such that the first exit time of the continuous-time process $\tau_{\bar{\varepsilon},a}(\varepsilon)$ \eqref{eqn:fet_cont} would be similar to the first exit time of its Euler discretization $\bar{\tau}_{\bar{\varepsilon},a}(\varepsilon)$ \eqref{eqn:fet_dis}. 
%
%

We first state our assumptions.
\begin{assumption}
\label{assump:unq} 
The SDE \eqref{eqn-general-sde} admits a unique strong solution. 
\end{assumption}
\begin{assumption}
\label{assump:novikov}
The process $\phi_t \triangleq -\frac{b({W}) + \nabla f({W}(t))}{\varepsilon \sigma}$ satisfies 
$\mathbb E \exp\left(\frac{1}{2}\int_0^T \phi^2_t \mathrm{d}t\right)<\infty.$ 
\end{assumption}
\begin{assumption}\label{assump:holderCondition}
The gradient of $ f$ is $\gamma$-H\"{o}lder continuous with $ \frac{1}{2} < \gamma < \min\{\frac{1}{\sqrt{2}},\frac{\alpha}{2}\}$: 
\begin{align*}
\|\nabla f(x) - \nabla f(y)\| \leq M \|x-y \|^\gamma, \qquad \forall x,y\in\mathbb{R}^d.
\end{align*}
\end{assumption}
\begin{assumption}\label{assump:gradientCondition}
The gradient of $f$ satisfies the following assumption: 
$\|\nabla f(0) \| \leq B$. 
\end{assumption}
\begin{assumption}\label{assump:dissipative}
For some $m>0$ and $b\geq 0$, $f$ is $(m,b,\gamma)$-dissipative:
$\langle x,\nabla f(x)\rangle\geq m\Vert x\Vert^{1+\gamma}-b$, $\forall x\in\mathbb{R}^d$.
\end{assumption}

We note that, as opposed to the theory of SDEs driven by Brownian motion, the theory of L\'{e}vy-driven SDEs is still an active research field where even the existence of solutions with general drift functions is not well-established and the main contributions have appeared in the last decade \cite{priola2012pathwise,kulik2019weak}. Therefore, \textbf{A}\ref{assump:unq} has been a common assumption in stochastic analysis, e.g.\ \cite{imkeller2006first,imkeller2010first,liang2018gradient}. Nevertheless, existence and uniqueness results have been very recently established in \cite{kulik2019weak} for SDEs with bounded H\"{o}lder drifts. Therefore \textbf{A}\ref{assump:unq} and \textbf{A}\ref{assump:novikov} directly hold for bounded gradients and extending this result to H\"{o}lder and dissipative drifts is out of the scope of this study. 
On the other hand, the assumptions \textbf{A}\ref{assump:holderCondition}-\textbf{A}\ref{assump:dissipative} are standard conditions, which are often considered in non-convex optimization algorithms that are based on discretization of diffusions \cite{raginsky17a,xu2018global,erdogdu2018global,gao18,gao-sghmc}.

Now, we identify an explicit condition for the step-size, which is one of our main contributions.
\begin{assumption}
\label{assump:stepsize}
For a given $\delta >0$, $t = K\eta$, and for some $C>0$, the step-size satisfies the following condition:
\begin{align*}
0<\eta\leq\min\Big\{1,\frac{m}{M^2}, \Big(\frac{\delta^2}{2K_1t^2}\Big)^{\frac{1}{\gamma^2+2\gamma-1}}, \Big(\frac{\delta^2}{2K_2t^2}\Big)^{\frac{1}{2\gamma}}, \Big(\frac{\delta^2}{2K_3t^2}\Big)^{\frac{\alpha}{2\gamma}}, \Big(\frac{\delta^2}{2K_4t^2}\Big)^{\frac{1}{\gamma}}\Big\},
\end{align*}
where $\varepsilon$ is as in \eqref{eqn-sgd-discrete-model}, the constants $m,M,b$ are defined by \textbf{A}\ref{assump:holderCondition}-- \textbf{A}\ref{assump:dissipative} and
\begin{align*}
K_1\triangleq& \frac{CM^{2+2\gamma} 3^\gamma}{\varepsilon^2 \sigma^2}\max\Big\{(2 (b+m))^{\gamma^2}, 2^{\gamma^2} B^{2\gamma^2}, d\varepsilon^{2\gamma^2} R_1,
d\varepsilon^{2\gamma^2} R_2\Big\},
\\
K_2\triangleq& \frac{CM^{2+2\gamma} 3^\gamma}{2\varepsilon^2 \sigma^2}\Big(\mathbb{E}\Vert {W}(0)\Vert^{2\gamma^2} + {B^{2}}/{M^{2}}\Big),\\
K_3\triangleq& \frac{M^2 3^\gamma \varepsilon^{2\gamma-2}d^{2\gamma}}{2\sigma^2}\Big(\frac{2^{2\gamma}\Gamma((1+2\gamma)/2)\Gamma(1-2\gamma/\alpha)}{\Gamma(1/2)\Gamma(1-\gamma)}\Big), \\
K_4\triangleq& \frac{M^2 3^\gamma \varepsilon^{2\gamma-2}d^{2\gamma}}{2\sigma^2}\Big(2^{\gamma}{\Gamma(\frac{2\gamma+1}{2})}/{\sqrt{\pi}}\Big),
\end{align*}
with 
$$ R_1 \triangleq \Big(\frac{2^{2\gamma^2}\Gamma((1+2\gamma^2)/2)\Gamma(1-2\gamma^2/\alpha)}{\Gamma(1/2)\Gamma(1-\gamma^2)}\Big), R_2 \triangleq \Big(2^{\gamma^2}{\Gamma\left(\frac{2\gamma^2+1}{2}\right)}/{\sqrt{\pi}}\Big)\Big).$$
\end{assumption}


We now present our main result, its proof can be found in the supplementary material.
\begin{theorem}\label{thm:ultimateTheorem}
Under assumptions \textbf{A}\ref{assump:unq}- \textbf{A}\ref{assump:stepsize}, the following inequality holds:
\begin{align*}
\mathbb{P}[\tau_{-\bar{\varepsilon},a}(\varepsilon)>K\eta]-C_{K,\eta,\varepsilon,d,\bar{\varepsilon}}-\delta\leq\mathbb{P}[\bar{\tau}_{0,a}(\varepsilon)>K]\leq\mathbb{P}[\tau_{\bar{\varepsilon},a}(\varepsilon)>K\eta]+C_{K,\eta,\varepsilon,d,\bar{\varepsilon}}+\delta,
\end{align*}
where,
\begin{align*}
C_{K,\eta,\varepsilon,d,\bar{\varepsilon}}\triangleq& \frac{C_1(K\eta(d\varepsilon+1)+1)^\gamma e^{M\eta}M\eta}{\bar{\varepsilon}} + 1-\Big(1-Cde^{-\bar{\varepsilon}^2e^{-2M\eta}(\varepsilon\sigma)^{-2}/(16d\eta)}\Big)^K\\
&+ 1-\Big(1-C_\alpha d^{1+\alpha/2}\eta e^{\alpha M\eta}\varepsilon^{\alpha}\bar{\varepsilon}^{-\alpha}\Big)^K,
\end{align*}
for some constants $C_1,C_\alpha$ and $C$ that does not depend on $\eta$ or $\varepsilon$, 
$M$ is given by \textbf{A}\ref{assump:holderCondition} and $\varepsilon$ is as in \eqref{eqn-general-sde}--\eqref{eqn-sgd-discrete-model}. 
\end{theorem}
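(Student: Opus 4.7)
The plan is to couple the discrete iterates $W^k$ and the continuous process $W(t)$ on a common probability space through a continuous-time interpolation, bound the resulting pathwise discretization error, and convert this pathwise bound into a comparison of the two first-exit events. Let $B$ and $L^\alpha$ be the driving Brownian and $\alpha$-stable L\'evy processes of \eqref{eqn-general-sde} and define the interpolation $\bar W$ by $\bar W(0)=W^0$ and, for $t\in[k\eta,(k+1)\eta]$,
\begin{equation*}
\bar W(t) = W^k - (t-k\eta)\,\nabla f(W^k) + \varepsilon\sigma\bigl(B(t)-B(k\eta)\bigr) + \varepsilon\bigl(L^\alpha(t)-L^\alpha(k\eta)\bigr),
\end{equation*}
so that $\bar W(k\eta)=W^k$. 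The error $\|W(t)-\bar W(t)\|$ will produce the first summand of $C_{K,\eta,\varepsilon,d,\bar\varepsilon}$ and the additive $\delta$, while the oscillation of $\bar W$ around the discrete skeleton on each sub-interval will produce the remaining two summands.

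First, I would bound $\mathbb{E}\sup_{t\le K\eta}\|W(t)-\bar W(t)\|^2$. Subtracting the integral equations for $W$ and $\bar W$, the difference is driven entirely by $\int_0^t[\nabla f(W(s))-\nabla f(W^{\lfloor s/\eta\rfloor})]\,\mathrm{d}s$, which by \textbf{A}\ref{assump:holderCondition} is bounded above by $M\int_0^t\|W(s)-W^{\lfloor s/\eta\rfloor}\|^{\gamma}\,\mathrm{d}s$. Splitting $W(s)-W^{\lfloor s/\eta\rfloor}=(W(s)-\bar W(s))+(\bar W(s)-W^{\lfloor s/\eta\rfloor})$, applying Gr\"onwall's inequality, and using uniform moment bounds $\sup_{k\le K}\mathbb{E}\|W^k\|^{2\gamma^2}$ and $\sup_{t\le K\eta}\mathbb{E}\|W(t)\|^{2\gamma^2}$ (both finite by dissipativity \textbf{A}\ref{assump:dissipative}, one via a discrete Lyapunov argument and one via It\^o's formula for jump SDEs) yields an $L^2$-bound with four distinct contributions: a drift part scaling like $\eta^{\gamma^2+2\gamma-1}$, an initial-condition/gradient part scaling like $\eta^{2\gamma}$, a stable-jump part scaling like $\eta^{2\gamma/\alpha}$, and a Gaussian part scaling like $\eta^{\gamma}$. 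The step-size condition \textbf{A}\ref{assump:stepsize} is calibrated precisely so that each of these four contributions is at most $\delta^2/(2K_i t^2)$, hence their sum is at most $\delta^2$. Markov's inequality then gives $\mathbb{P}[\sup_{t\le K\eta}\|W(t)-\bar W(t)\|>\bar\varepsilon/2]\le\delta+(C_1\text{-term})$, which is how both the first summand and the $\delta$ appear in the theorem. Note that $2\gamma^2<\alpha$ is essential, since the stable increments only admit moments of order $<\alpha$.

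Second, on each sub-interval $[k\eta,(k+1)\eta]$ the Brownian and L\'evy increments in $\bar W(t)-W^k$ must themselves be small; otherwise $\bar W(t)$ could exit the enlarged ball even though $W^k$ does not. For the Brownian part, a coordinate-wise maximal inequality applied to the drifted process $-(t-k\eta)\nabla f(W^k)+\varepsilon\sigma(B(t)-B(k\eta))$ gives a Gaussian tail of order $d\exp(-\bar\varepsilon^2 e^{-2M\eta}/(16 d\eta(\varepsilon\sigma)^2))$, where the factor $e^{-2M\eta}$ absorbs the deterministic drift inflation controlled via \textbf{A}\ref{assump:holderCondition}--\textbf{A}\ref{assump:gradientCondition}; a union bound over the $K$ intervals then yields the second summand of $C_{K,\eta,\varepsilon,d,\bar\varepsilon}$. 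For the L\'evy part, using the scaling $L^\alpha(\eta)\stackrel{d}{=}\eta^{1/\alpha}L^\alpha(1)$ and the tail asymptotic $\mathbb{P}(|L^\alpha(1)|>x)\sim C_\alpha x^{-\alpha}$, applied coordinate-wise and then union-bounded over $K$ intervals, produces the third summand.

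Finally, on the intersection of these three good events we have $\sup_{t\le K\eta}\|W(t)-W^{\lfloor t/\eta\rfloor}\|\le\bar\varepsilon$, hence the set inclusions $\{\bar\tau_{0,a}>K\}\subseteq\{\tau_{\bar\varepsilon,a}>K\eta\}$ and $\{\tau_{-\bar\varepsilon,a}>K\eta\}\subseteq\{\bar\tau_{0,a}>K\}$ hold up to a failure probability equal to $C_{K,\eta,\varepsilon,d,\bar\varepsilon}+\delta$, which is exactly the two-sided inequality claimed. The main obstacle is the strong-error estimate of the second paragraph: classical Euler--Maruyama theory assumes a Lipschitz drift and finite-variance noise, neither of which is available here, so one must work with a Kunita--BDG inequality for jump processes at the level of $(2\gamma)$-th moments, and deploy the full dissipativity apparatus to control trajectory moments. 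Calibrating the four exponents in \textbf{A}\ref{assump:stepsize} sharply enough to close the Gr\"onwall loop while keeping the resulting step-size bound non-vacuous is the crux of the technical work.
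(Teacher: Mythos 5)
Your proposal follows a genuinely different route from the paper's, and I believe it has a gap that prevents it from reproducing \textbf{A}\ref{assump:stepsize}. You propose to couple $W$ and $\bar W$ pathwise on the same probability space, bound $\mathbb{E}\sup_{t\le K\eta}\|W(t)-\bar W(t)\|^2$ via Gr\"onwall, and convert to a probability statement via Markov. This is precisely the ``strong error of the Euler scheme'' approach that the paper explicitly \emph{rejects} in its proof overview, on the grounds that the resulting $K$-dependence is too pessimistic. Instead, the paper never controls the pathwise distance $\|W-\hat W\|$ at all: it performs a Girsanov change of measure (requiring \textbf{A}\ref{assump:novikov} and $\sigma>0$) to express $\KL(\hat\mu_t,\mu_t)$ as $\frac{1}{2\varepsilon^2\sigma^2}\mathbb{E}\int_0^t\|b(\hat W)+\nabla f(\hat W(s))\|^2\,\mathrm{d}s$, bounds this using only \emph{within-interval} increments of $\hat W$ (no Gr\"onwall loop, so no cumulative error amplification), and then passes to probabilities via Pinsker's inequality and an optimal TV coupling. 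The $\delta$ in Theorem~\ref{thm:ultimateTheorem} is the square root of that KL bound, not a Markov tail of a strong-error moment.

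The concrete reason your route cannot arrive at \textbf{A}\ref{assump:stepsize} is that the constants $K_1,\ldots,K_4$ all carry a factor $1/\sigma^2$, which in the paper's argument comes from the Girsanov density $1/(2\varepsilon^2\sigma^2)$. In your pathwise coupling, $W$ and $\bar W$ are driven by the \emph{same} $B$ and $L^\alpha$, so the stochastic terms cancel in the difference $W(t)-\bar W(t)$ and the resulting deterministic Gr\"onwall estimate has no $\sigma$-dependence whatsoever. Taking $\sigma\to 0$ makes \textbf{A}\ref{assump:stepsize} vacuous (it forces $\eta\to 0$), which is necessary because the Girsanov transform degenerates; your estimate, by contrast, remains finite as $\sigma\to 0$, so it is a different bound with different constants, not a calibration of \textbf{A}\ref{assump:stepsize}. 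Related to this, you attribute the first summand of $C_{K,\eta,\varepsilon,d,\bar\varepsilon}$ (the $C_1$-term) to a Markov bound on the pathwise error, but in the paper that term arises from an entirely separate estimate on $\max_k\|W(k\eta)\|^\gamma$ inside the oscillation lemma (Lemma~\ref{lemma:continuousVsDiscrete}); it has nothing to do with $\|W-\bar W\|$. The second and third summands of $C_{K,\eta,\varepsilon,d,\bar\varepsilon}$ are where your proposal matches the paper: both use reflection-principle tail bounds for the within-interval Brownian and L\'evy oscillations of the continuous process, union-bounded over $K$ intervals. But the mechanism linking $\bar\tau_{0,a}$ to $\tau_{\pm\bar\varepsilon,a}$ must go through the TV/coupling step, not through pathwise closeness.
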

\textbf{Exit time versus problem parameters.} In Theorem~\ref{thm:ultimateTheorem}, if we let $\eta$ go to zero for any $\delta$ fixed, the constant $C_{K,\eta,\varepsilon,d,\bar{\varepsilon}}$ will also go to zero, and since $\delta$ can be chosen arbitrarily small, this implies that the probability of the first exit time for the discrete process and the continuous process will approach each other when the step-size gets smaller, as expected. If instead, we decrease $d$ or $\varepsilon$, the quantity $C_{K,\eta,\varepsilon,d,\bar{\varepsilon}}$ also decreases monotonically, but it does not go to zero due to the first term in the expression of $C_{K,\eta,\varepsilon,d,\bar{\varepsilon}}$.

\textbf{Exit time versus width of local minima.} Popular activation functions used in deep learning such as ReLU functions are almost everywhere differentiable and therefore the cost function has a well-defined Hessian almost everywhere (see e.g. \cite{yang17relu}). The eigenvalues of the Hessian of the objective near local minima have also been studied in the literature (see e.g. \cite{sagun2016eigenvalues,papyan2018full}). If the Hessian around a local minimum is positive definite, the conditions for the multi-dimensional version of Theorem \ref{thm-exit-sde} in \cite{imkeller2010first}) are satisfied locally around a local minimum. For local minima lying in wider valleys, the parameter $a$ can be taken to be larger; in which case the expected exit time $\mathbb{E}\tau_{0,a}(\varepsilon)\sim \mathcal{O}(a^\alpha) $ will be larger by the formula \eqref{eq-tau-eps}. In other words, the SDE \eqref{eqn-levy-sde} spends more time to exit wider valleys. Theorem \ref{thm:ultimateTheorem} shows that SGD modeled by the discretization of this SDE will also inherit a similar behavior if the step-size satisfies the conditions we provide. 

\section{Proof Overview}
Relating the first exit times for $W(t)$ and $W^k$ often requires obtaining bounds on the distance between $W(k\eta)$ and $W^k$. In particular if $\|W^k - W(k\eta) \|$ is small with high probability, then we expect that their first exit times from the set $A$ will be close to each other as well with high probability.  

For objective functions with bounded gradients, in order to relate $\tau_{\bar{\varepsilon},a}(\varepsilon)$ to $\bar{\tau}_{\bar{\varepsilon},a}(\varepsilon)$, one can attempt to use the strong convergence of the Euler scheme (cf.\ \cite{mikulevivcius2018rate} Proposition 1):
$\lim_{\eta\rightarrow 0}\mathbb{E}\Vert W^k-W(k\eta)\Vert = 0$.
By using Markov's inequality, this result implies convergence in probability: for any $\delta >0$ and $\varepsilon>0$, there exists $\eta$ such that
$\mathbb{P}(\Vert W^k-W(k\eta)\Vert>\varepsilon) < \delta/2$. 
Then, if $W(k\eta)\in A$ then one of the following events must happen: 
\begin{enumerate}[topsep=0pt]
    \item $W^k\in A$, 
    \item $W^k\not\in A$ and $\Vert W^k-W(k\eta)\Vert>\epsilon$  (with probability less than $\delta/2$), 
    \item $W^k\notin A$ and distance from $W^k$ to $A$ is at most $\varepsilon$ (with probability less than $\delta/2$). 
\end{enumerate}
By using this observation, we obtain:
$\mathbb{P}[W(k\eta)\in A] \leq \mathbb{P}[W^k\in A] + \delta$.
%
Even though we could use this result in order to relate $\tau_{\bar{\varepsilon},a}(\varepsilon)$ to $\bar{\tau}_{\bar{\varepsilon},a}(\varepsilon)$, this approach would not yield a meaningful condition for $\eta$ since the bounds for the strong error $\mathbb{E}\Vert W^k-W(k\eta)\Vert$ often grows exponentially in general with $k$, which means $\eta$ should be chosen exponentially small for a given $k$. Therefore, in our strategy, we choose a different path where we do not use the strong convergence of the Euler scheme. 



Our proof strategy is inspired by the recent study \cite{tzen2018local}, where the authors analyze the empirical metastability of the Langevin equation which is driven by a Brownian motion. However, unlike the Brownian case that \cite{tzen2018local} was based on, some of the tools for analyzing Brownian SDEs do not exist for L\'{e}vy-driven SDEs, which increases the difficulty of our task. 

We first define a \emph{linearly interpolated} version of the discrete-time process $\{W^k\}_{k\in \mathbb{N}_+}$, which will be useful in our analysis, given as follows: 
%
%
\begin{align}\label{eqn:interpolatedSDE}
\mathrm{d}\hat{W}(t)=b(\hat{W})\mathrm{d}t + \varepsilon\sigma dB(t) + \varepsilon\text{d}L^{\alpha}(t),
\end{align}
where $\hat{W} \equiv \{\hat{W}(t)\}_{t\geq 0}$ denotes the whole process and the \emph{drift} function $b(\hat{W})$ is chosen as follows:
\begin{align*}
b (\hat{W}) &\triangleq -\sum\nolimits_{k=0}^{\infty}\nabla f(\hat{W}(k\eta))\mathbb{I}_{[k\eta,(k+1)\eta)}(t). 
\end{align*}
Here, $\mathbb{I}$ denotes the indicator function, i.e.\ $\mathbb{I}_{S}(x) = 1$ if $x \in S$ and $\mathbb{I}_{S}(x) = 0$ if $x \notin S$. It is easy to verify that $\hat{W}(k\eta) = W^k$ for all $k \in \mathbb{N}_+$ \cite{dalalyan2017theoretical,raginsky17a}.

In our approach, we start by developing a Girsanov-like change of measures \cite{tankov2003financial} to express the Kullback-Leibler (KL) divergence between $\mu_t$ and $\hat{\mu}_t$, which is defined as follows:
\begin{align*}
    \KL(\hat\mu_t, \mu_t)\triangleq\int\log\frac{\mathrm{d}\hat{\mu}_t}{\mathrm{d}\mu_t}\mathrm{d}\hat\mu_t,
\end{align*}
where $\mu_t$ denotes the law of $\{W(s)\}_{s\in [0,t]}$, $\hat{\mu}_t$ denotes the law of $\{\hat{W}(s)\}_{s\in [0,t]}$, and $\mathrm{d}\mu_t/\mathrm{d}\hat\mu_t$ is the Radon–Nikodym derivative of $\mu_t$ with respect to $\hat\mu_t$. Here, we require \textbf{A}\ref{assump:novikov} for the existence of a Girsanov transform between $\hat\mu_t$ and $\mu_t$ and for establishing an explicit formula for the transform. In the supplementary document, we show that the KL divergence between $\mu_t$ and $\hat{\mu}_t$ can be written as: 
\begin{align}
\KL(\hat \mu_t, \mu_t) = \frac{1}{2 \varepsilon^2 \sigma^2}\mathbb E\left[\int_0^t
\|b(\hat{W}) + \nabla f(\hat{W}(s))\|^2 \mathrm{d}s\right].
\label{eqn:girsanov}
\end{align}
While this result has been known for SDEs driven by Brownian motion \cite{oksendal2005applied}, none of the references we are aware of expressed the KL divergence as in \eqref{eqn:girsanov}. We also note that one of the key reasons that allows us to obtain \eqref{eqn:girsanov} is the presence of the Brownian motion in \eqref{eqn-general-sde}, i.e.\ $\sigma >0$. For $\sigma = 0$ such a measure transformation cannot be performed \cite{debussche2013existence}.

In the next result, we show that if the step-size is chosen sufficiently small, the KL divergence between $\mu_t$ and $\hat{\mu}_t$ is bounded.
\begin{theorem}
\label{cor:KLbound}
Assume that the conditions \textbf{A}\ref{assump:unq}-\textbf{A}\ref{assump:stepsize} hold. Then the following inequality holds: 
\begin{align*}
\KL(\hat{\mu}_t,\mu_t)\leq 2 \delta^2.
\end{align*}
\end{theorem}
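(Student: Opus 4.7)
The plan is to start directly from the Girsanov identity \eqref{eqn:girsanov} and reduce the entire problem to bounding the integrand $\|\nabla f(\hat{W}(s))-\nabla f(W^k)\|^2$ on each subinterval $[k\eta,(k+1)\eta)$, since there $b(\hat{W})=-\nabla f(W^k)$ and $\hat{W}(k\eta)=W^k$. By H\"older continuity \textbf{A}\ref{assump:holderCondition}, this integrand is dominated by $M^2\|\hat{W}(s)-W^k\|^{2\gamma}$, and by \eqref{eqn:interpolatedSDE} the displacement splits into three constituents,
\begin{equation*}
\hat{W}(s)-W^k=-(s-k\eta)\nabla f(W^k)+\varepsilon\sigma(B(s)-B(k\eta))+\varepsilon(L^\alpha(s)-L^\alpha(k\eta)).
\end{equation*}
An elementary convexity inequality of the form $(u+v+w)^{2\gamma}\leq 3^{2\gamma-1}(u^{2\gamma}+v^{2\gamma}+w^{2\gamma})$ then separates the bound into three pieces; the factor $3^{\gamma}$ appearing in every $K_i$ traces back precisely to this step.

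Each noise piece is handled by an explicit fractional-moment computation. For the Brownian increment, combining the standard Gaussian moment $\mathbb{E}|Z|^{2\gamma}=2^{\gamma}\Gamma((2\gamma+1)/2)/\sqrt{\pi}$ with the inequality $\|x\|^{2\gamma}\leq d^{2\gamma-1}\sum_i|x_i|^{2\gamma}$ (legitimate because $2\gamma>1$ from \textbf{A}\ref{assump:holderCondition}) and integrating $(s-k\eta)^{\gamma}$ over the subinterval produces exactly the constant appearing in $K_4$ together with a factor $\eta^{\gamma}$. For the L\'evy increment, since $2\gamma<\alpha$ (also from \textbf{A}\ref{assump:holderCondition}), the fractional-moment identity $\mathbb{E}|X|^p=2^p\Gamma((1+p)/2)\Gamma(1-p/\alpha)/(\Gamma(1/2)\Gamma(1-p/2))$ for $X\sim\mathcal{S}\alpha\mathcal{S}(1)$ applies with $p=2\gamma$, yielding the constant inside $K_3$ and a factor $\eta^{2\gamma/\alpha}$. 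The same two formulas, applied with exponent $2\gamma^2$ in place of $2\gamma$, are precisely the auxiliary constants $R_1$ and $R_2$ that will reappear below.

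The drift piece $\eta^{2\gamma}\mathbb{E}\|\nabla f(W^k)\|^{2\gamma}$ is the main obstacle. Applying H\"older continuity at $0$ together with \textbf{A}\ref{assump:gradientCondition} yields $\|\nabla f(W^k)\|^{2\gamma}\lesssim B^{2\gamma}+M^{2\gamma}\|W^k\|^{2\gamma^2}$, so everything reduces to a \emph{uniform-in-$k$} bound on the fractional moment $\mathbb{E}\|W^k\|^{2\gamma^2}$. The difficulty is that the L\'evy noise has infinite variance, so the usual $p=2$ Lyapunov expansion on $\|W^k\|^2$ is not available. My plan is to work instead with the concave Lyapunov function $V(w)=(1+\|w\|^2)^{\gamma^2}$, establish a one-step recursion $\mathbb{E}[V(W^{k+1})\mid\mathcal{F}_k]\leq\rho V(W^k)+D$ with $\rho<1$, using the dissipativity assumption \textbf{A}\ref{assump:dissipative} to produce a negative drift, the step-size restriction $\eta\leq m/M^2$ to absorb the quadratic-in-$\eta$ self-term $\eta^2\|\nabla f(W^k)\|^2$, and the fractional Gaussian and $\alpha$-stable moments used above (now with exponent $2\gamma^2$) to bound the additive constant $D$. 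Iterating the recursion gives $\sup_k\mathbb{E}\|W^k\|^{2\gamma^2}\leq C\cdot\max\{(2(b+m))^{\gamma^2},\,2^{\gamma^2}B^{2\gamma^2},\,d\varepsilon^{2\gamma^2}R_1,\,d\varepsilon^{2\gamma^2}R_2\}$, which matches exactly the max defining $K_1$.

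Reassembling the three (effectively four, after splitting the drift bound into its $B$-part and its $\|W^k\|^{2\gamma^2}$-part) contributions and integrating the KL-integrand over $s\in[0,t]$ produces four terms of the shape $K_i\cdot t^2\cdot\eta^{e_i}$ with exponents $e_i\in\{\gamma^2+2\gamma-1,\,2\gamma,\,2\gamma/\alpha,\,\gamma\}$, which are precisely the four branches of the $\min$ in \textbf{A}\ref{assump:stepsize}. Each branch of the step-size condition forces its corresponding term below $\delta^2/2$, and summing the four bounds yields $\KL(\hat\mu_t,\mu_t)\leq 2\delta^2$.
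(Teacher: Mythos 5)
The outer skeleton of your argument (Girsanov identity, H\"older continuity, three-term decomposition of $\hat{W}(s)-W^k$, the $3^\gamma$ factor from convexity, and the fractional moments of the Gaussian and stable increments) matches the paper. The gap is in your treatment of the drift term, and specifically in the claimed uniform-in-$k$ bound on $\mathbb{E}\|W^k\|^{2\gamma^2}$.

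You propose to establish a one-step recursion $\mathbb{E}[V(W^{k+1})\mid\mathcal{F}_k]\leq\rho V(W^k)+D$ with $\rho<1$, using $V(w)=(1+\|w\|^2)^{\gamma^2}$ and dissipativity \textbf{A}\ref{assump:dissipative}. This does not go through: dissipativity only gives a \emph{sub-quadratic} inward drift $\langle x,\nabla f(x)\rangle\geq m\|x\|^{1+\gamma}-b$ with $\gamma<1$, so the one-step shrinkage it induces in $V$ is of order $\eta m\|w\|^{1+\gamma}/(1+\|w\|^2)$, which tends to $0$ as $\|w\|\to\infty$. Hence there is no $\rho<1$ holding uniformly in $w$; a geometric contraction would require quadratic dissipativity, which is incompatible with \textbf{A}\ref{assump:holderCondition} ($\gamma$-H\"older gradient with $\gamma<1$). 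Moreover, even a correct sub-geometric Lyapunov argument cannot reproduce the paper's step-size condition. The paper (Lemma~\ref{lemma:expecataionBound}) proves a \emph{linearly growing} bound $\mathbb{E}\|\hat{W}(j\eta)\|^{2\gamma^2}\leq\mathbb{E}\|\hat{W}(0)\|^{2\gamma^2}+j\cdot(\text{increments})$, where each increment carries explicit $\eta$-factors such as $(2\eta(b+m))^{\gamma^2}$, $\varepsilon^{2\gamma^2}\eta^{2\gamma^2/\alpha}dR_1$, etc. Summing over $j$ and integrating then produces the $k^2\eta^{1+2\gamma+\gamma^2}=t^2\eta^{\gamma^2+2\gamma-1}$ scaling that defines the exponent $\tfrac{1}{\gamma^2+2\gamma-1}$ in \textbf{A}\ref{assump:stepsize}; the $\max$ in $K_1$ is precisely what is left after pulling out the common factor $\eta^{\gamma^2}$ from those per-step increments. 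A uniform-in-$k$ bound would instead give a drift contribution scaling as $t\eta^{2\gamma}$, yielding a different exponent in \textbf{A}\ref{assump:stepsize} and a different $K_1$ (one that would, among other things, retain an $\eta$-dependence you cannot avoid). So the claim that your $\sup_k$ bound ``matches exactly the max defining $K_1$'' is not correct: $K_1$ is not a uniform moment bound but the coefficient of the linear growth term. The fix is to drop the Lyapunov contraction idea and follow the paper's Lemma~\ref{lemma:expecataionBound}, which uses dissipativity and $\eta\leq m/M^2$ only to control the \emph{one-step} Euler map $w\mapsto w-\eta\nabla f(w)$, and then iterates additively.
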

The proof technique is similar to the approach of  \cite{dalalyan2017theoretical,raginsky17a,nguyen2019non}: The idea is to divide the integral in \eqref{eqn:girsanov} into smaller pieces and bounding each piece separately. Once we obtain a bound on KL, by using an optimal coupling argument, the data processing inequality, and Pinsker's inequality, we obtain a bound for the total variation (TV) distance between $\mu_t$ and $\hat{\mu}_t$ as follows:
\begin{align*}
    \mathbb{P}_{\mathbf{M}}[(W(\eta),\ldots,W(K\eta))\neq (\hat{W}(\eta),\ldots,\hat{W}(K\eta))] \leq \Vert \mu_{K\eta}-\hat{\mu}_{K\eta}\Vert_{TV}
\leq \Big(\frac{1}{2}\KL(\hat{\mu}_{K\eta},\mu_{K\eta})\Big)^{\frac1{2}}.
\end{align*}
where the TV distance is defined in Section~\ref{sec:intro}. Besides, $\mathbf{M}$ denotes the optimal coupling between $\{W(s)\}_{s\in [0,K\eta]}$ and $\{\hat{W}(s)\}_{s\in [0,K\eta]}$, i.e., the joint probability measure of $\{W(s)\}_{s\in [0,K\eta]}$ and $\{\hat{W}(s)\}_{s\in [0,K\eta]}$, which satisfies the following identity \cite{lindvall2002lectures}:
\begin{align*}
\mathbb{P}_{\mathbf{M}}[\{W(s)\}_{s\in [0,K\eta]}\not=\{\hat{W}(s)\}_{s\in [0,K\eta]}]=\Vert \mu_{K\eta}-\hat{\mu}_{K\eta}\Vert_{TV}.
\end{align*}
Combined with Theorem~\ref{cor:KLbound}, this inequality implies the following useful result:
\begin{align}
    \mathbb{P}[(W(\eta),\ldots,W(K\eta))\in A] - \delta\leq\mathbb{P}[\bar{\tau}_{0,a}(\varepsilon)>K]\leq\mathbb{P}[(W(\eta),\ldots,W(K\eta))\in A] + \delta
\end{align}
where we used the fact that the event $(\hat{W}(\eta),\ldots,\hat{W}(K\eta))\in A$ is equivalent to the event $(\bar{\tau}_{0,a}(\varepsilon)>K)$.
The remaining task is to relate the probability $\mathbb{P}[(W(\eta),\ldots,W(K\eta))\in A]$ to $\mathbb{P}[\tau_{\bar{\varepsilon},a}(\varepsilon) > K\eta]$. The event $(W(\eta),\ldots,W(K\eta))\in A$ ensures that the process $W(t)$ does not leave the set $A$ when $t = \eta, \dots, K\eta$; however, it does not indicate that the process remains in $A$ when $t \in (k\eta, (k+1)\eta)$. In order to have a control over the whole process, we introduce the following event:
\begin{align*}
B\triangleq\Big\{\max_{0\leq k\leq K-1}\sup_{t\in[k\eta,(k+1)\eta]}\Vert W(t)-W(k\eta)\Vert\leq\bar{\varepsilon}\Big\},
\end{align*}
such that the event $[(W(\eta),\ldots,W(K\eta))\in A]\cap B$ ensures that the process stays close to $A$ for the whole time.
By using this event, we can obtain the following inequalities:
\begin{align*}
\mathbb{P}[(W(\eta),\ldots,W(K\eta))\in A]\leq&\mathbb{P}[(W(\eta),\ldots,W(K\eta))\in A\cap B]+\mathbb{P}[(W(\eta),\ldots,W(K\eta))\in B^c]\\
=&\mathbb{P}[\tau_{\bar{\varepsilon},a}(\varepsilon)>K\eta]+\mathbb{P}[(W(\eta),\ldots,W(K\eta))\in B^c].
\end{align*}
By using the same approach, we can obtain a lower bound on $\mathbb{P}[(W(\eta),\ldots,W(K\eta))\in A]$ as well. Hence, our final task reduces to bounding the term $\mathbb{P}[(W(\eta),\ldots,W(K\eta))\in B^c]$, which we perform by using the weak reflection principles of L\'{e}vy processes \cite{bayraktar2015weak}. This finally yields Theorem~\ref{thm:ultimateTheorem}.


\section{Numerical Illustration}



To illustrate our results, we first conduct experiments on a synthetic problem, where the cost function is set to $f(x)=\frac{1}{2}\| x\|^2$. This corresponds to an Ornstein-Uhlenbeck-type process, which is commonly considered in metastability analyses \cite{duan}. This process locally satisfies the conditions \textbf{A}\ref{assump:unq}-\textbf{A}\ref{assump:dissipative}.

Since we cannot directly simulate the continuous-time process, we consider the stochastic process sampled from \eqref{eqn-sgd-discrete-model} with sufficiently small step-size as an approximation of the continuous scheme. Thus, we organize the experiments as follows.
We first choose a very small step-size, i.e. $\eta=10^{-10}$. Starting from an initial point $W^0$ satisfying $\Vert W^0\Vert< a$, we iterate \eqref{eqn-sgd-discrete-model} until we find the first $K$ such that $\Vert W^K\Vert>a$. We repeat this experiment $100$ times, then we take the average $K\eta$ as the `ground-truth' first exit time. We continue the experiments by calculating the first exit times for larger step-sizes (each repeated $100$ times), and compute their distances to the ground truth.

\begin{figure}[t]
    \centering
    \subfigure[]{\includegraphics[width=0.24\columnwidth]{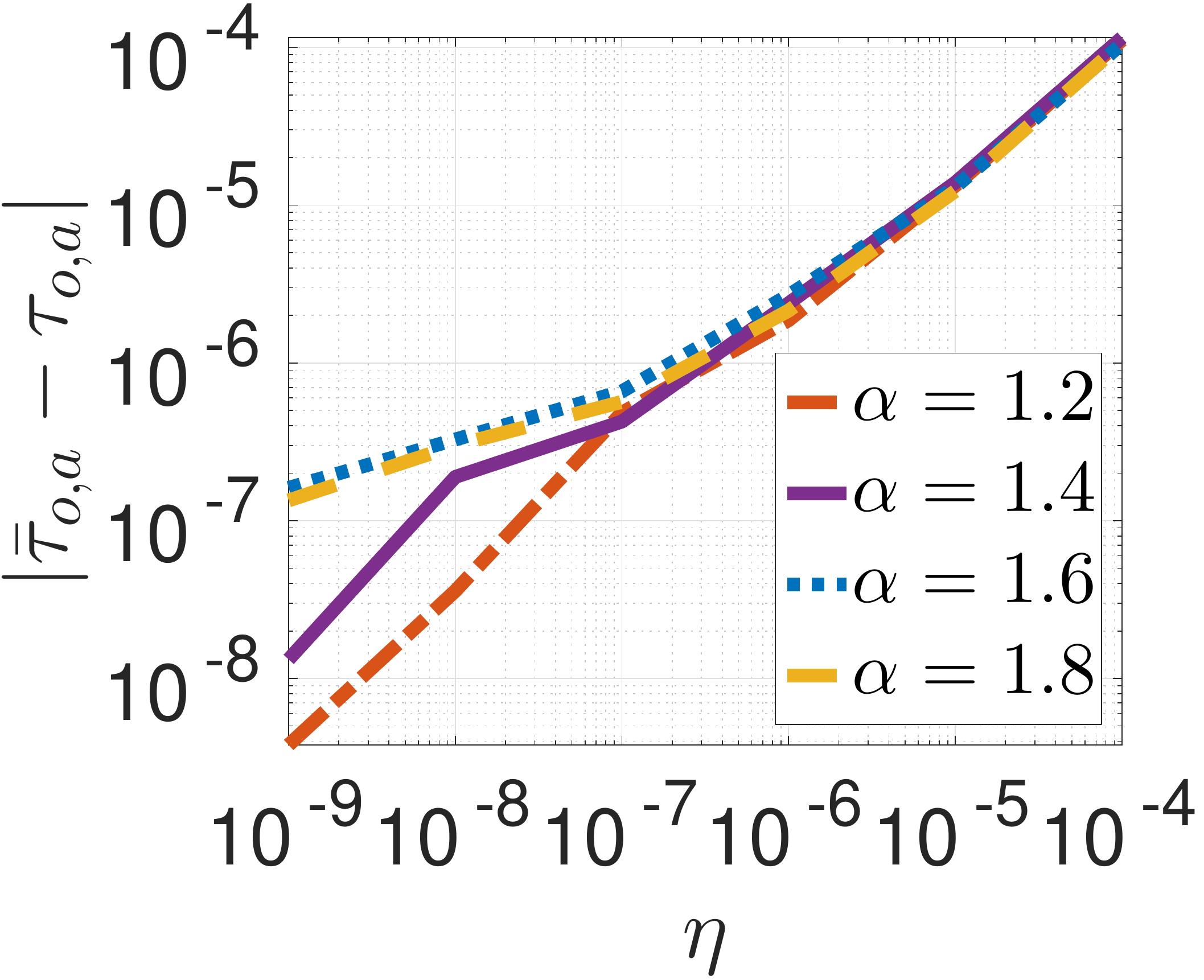}}
    \subfigure[]{\includegraphics[width=0.24\columnwidth]{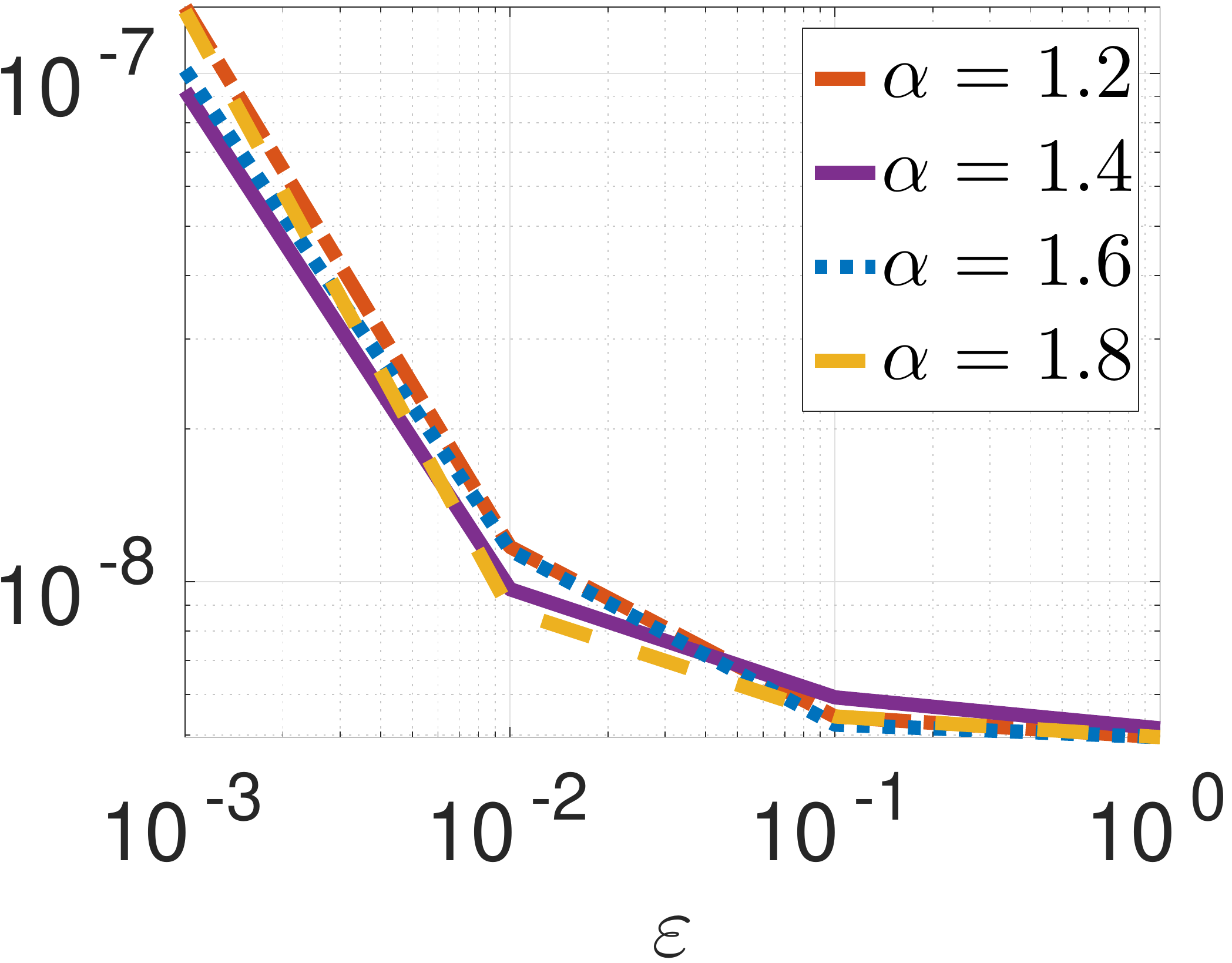}}
    \subfigure[]{\includegraphics[width=0.24\columnwidth]{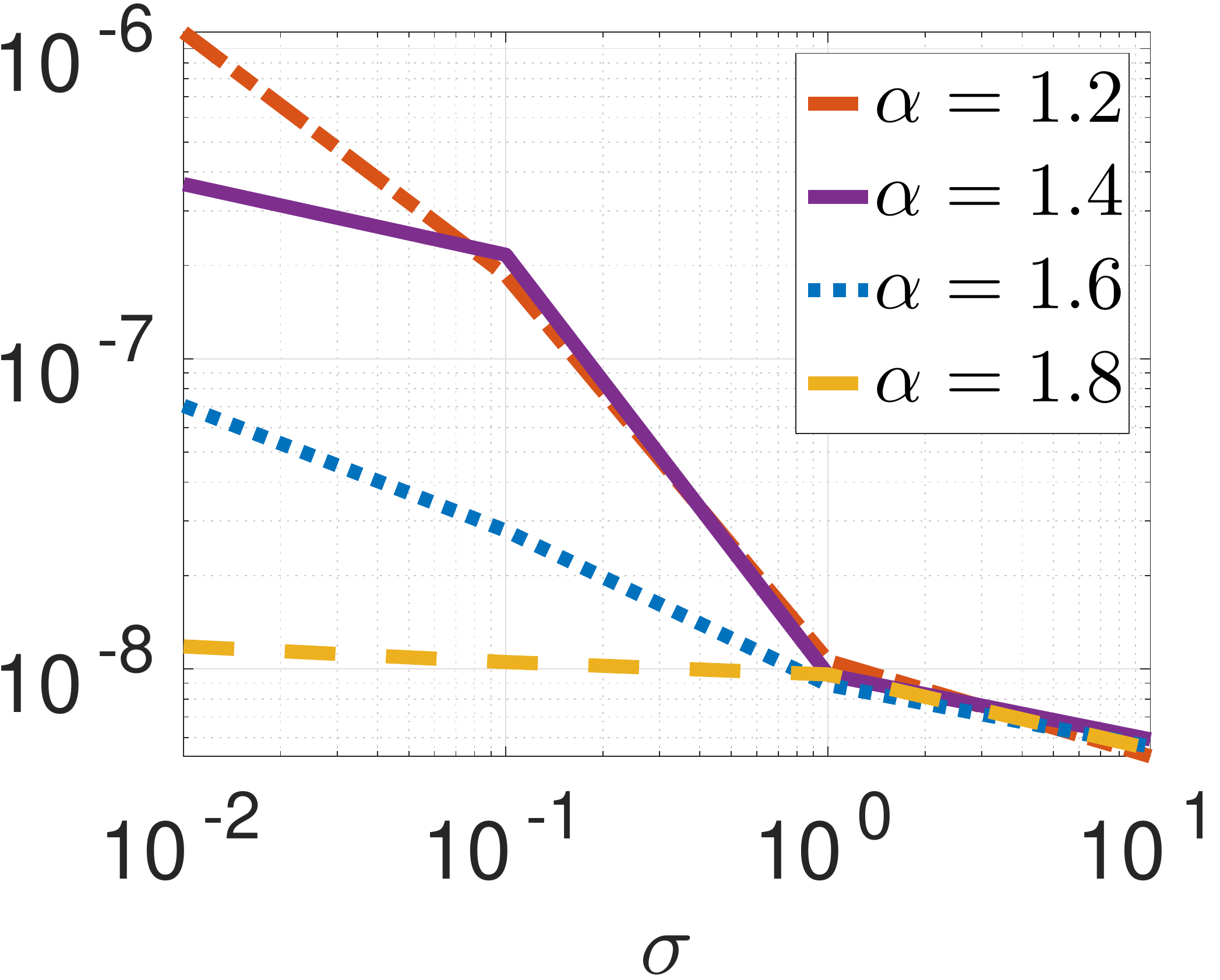}}
    \subfigure[]{\includegraphics[width=0.24\columnwidth]{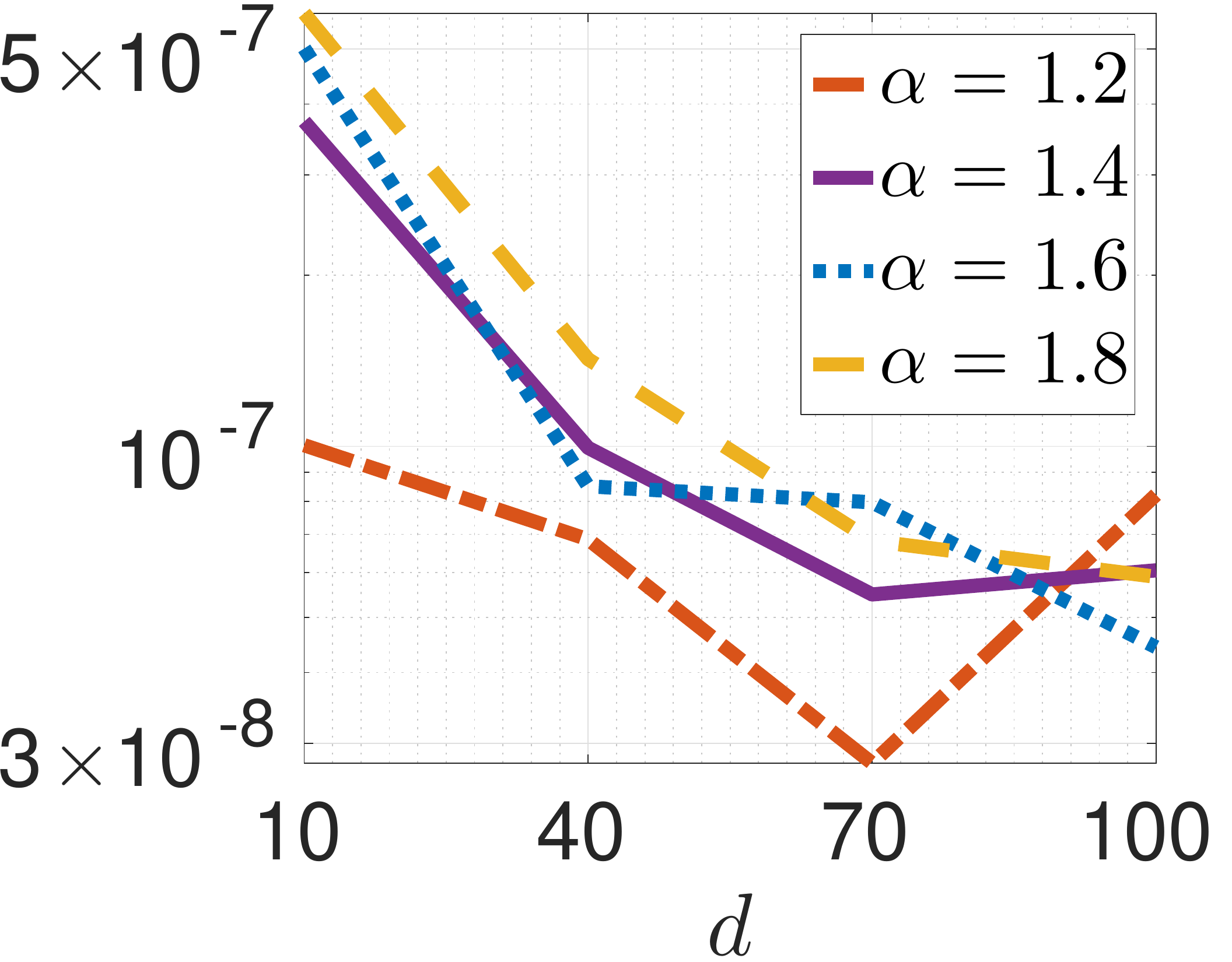}}
    \vspace{-10pt}
    \caption{Results of the synthetic experiments. }
    \vspace{-5pt}
    \label{fig:experiments}
\end{figure}

The results for this experiment are shown in Figure~\ref{fig:experiments}. By Theorem~\ref{thm:ultimateTheorem}, the distance between the first exit times of the discretization and the continuous processes depends on two terms $C_{K,\eta,\varepsilon,d,\bar{\varepsilon}}$ and $\delta$, which are used for explaining our experimental results.

We observe from Figure~\ref{fig:experiments}(a) that the error to the ground-truth first exit time is an increasing function of $\eta$, which directly matches our theoretical result.
%
%
%
%
%
Figure~\ref{fig:experiments}(b) shows that, with small noise limit (e.g., in our settings, $\varepsilon<1$ versus $\eta\approx10^{-8}$), the error decreases with the parameter $\varepsilon$. By \textbf{A}\ref{assump:stepsize},
with increased $\varepsilon$, we have
the term $\delta$ to be reduced. On the other hand, $C_{K,\eta,\varepsilon,d,\bar{\varepsilon}}$ increases with $\varepsilon$. However, at small noise limit, this effect is dominated by the decrease of $\delta$, that makes the error decrease overall.
The decreasing speed then decelerates with larger $\varepsilon$, since, the product $\varepsilon\eta$ becomes so large that the increase of $C_{K,\eta,\varepsilon,d,\bar{\varepsilon}}$ starts to dominate the decrease of $\delta$.
Thus, it suggests that for a large $\varepsilon$, a very small step-size $\eta$ would be required for reducing the distance between the first exit times of the processes.
In Figure~\ref{fig:experiments}(c), the error decreases when the variance $\sigma$ increases. The reason for the performance is the same as in (b), and can be explained by considering the expression of $\delta$ and $C_{K,\eta,\varepsilon,d,\bar{\varepsilon}}$ in the conclusion of Theorem~\ref{thm:ultimateTheorem}.

In Figure~\ref{fig:experiments}(d), for small dimension, with the same exit time interval, when we increase $d$, both processes escape the interval earlier, with smaller exit times.
Hence, the distance between their exit times becomes smaller. With larger $d$, the increasing effect of $\delta$ and $C_{K,\eta,\varepsilon,d,\bar{\varepsilon}}$ starts to dominate the above `early-escape' effect, thus, the decreasing speed of the error diminish. 
We observe that the error even slightly increases when $\alpha=1.2$ and $d$ grows from $70$ to $100$.

\begin{figure}[t]
    \vspace{-10pt}
    \centering
    \includegraphics[width=.33\columnwidth]{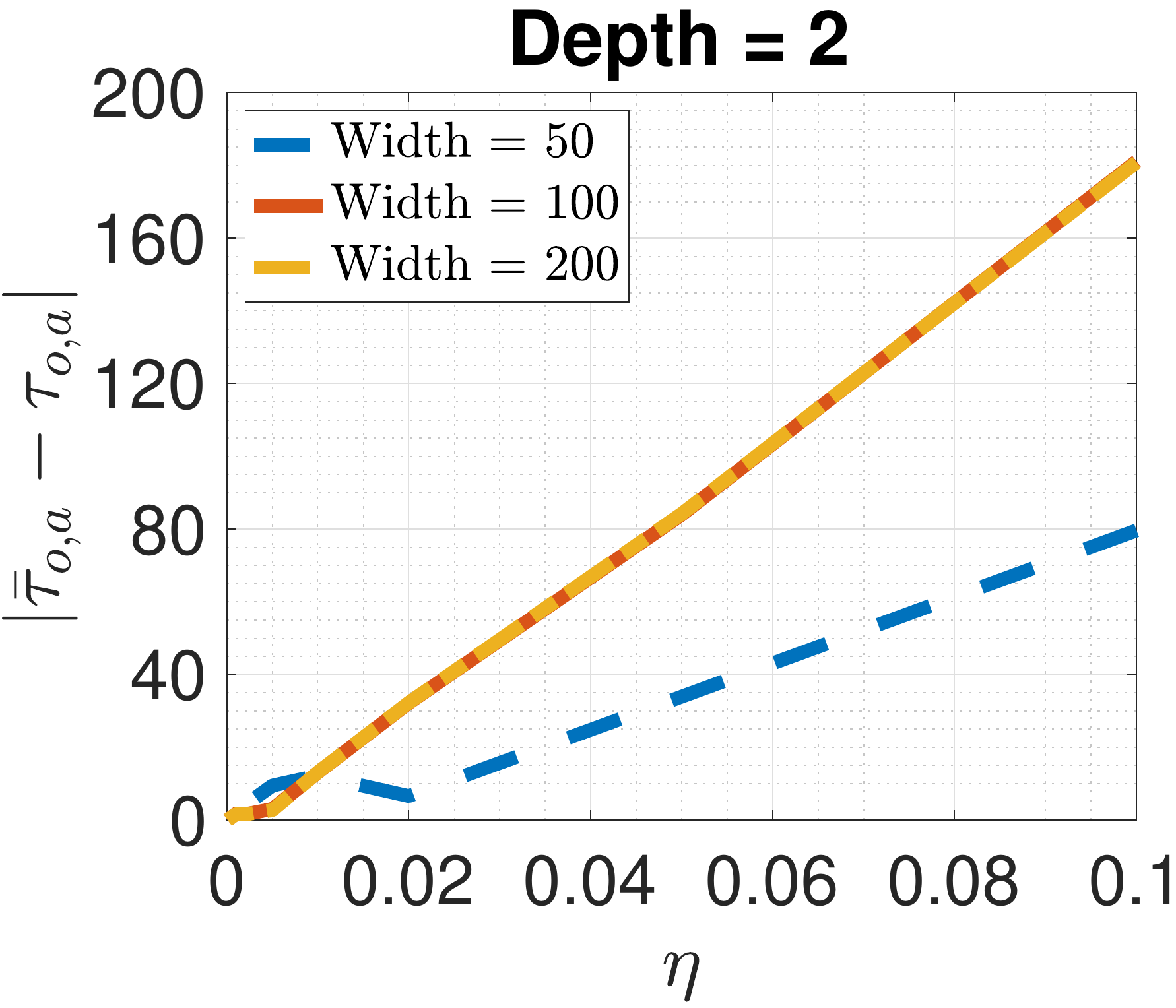}
    \includegraphics[width=.33\columnwidth]{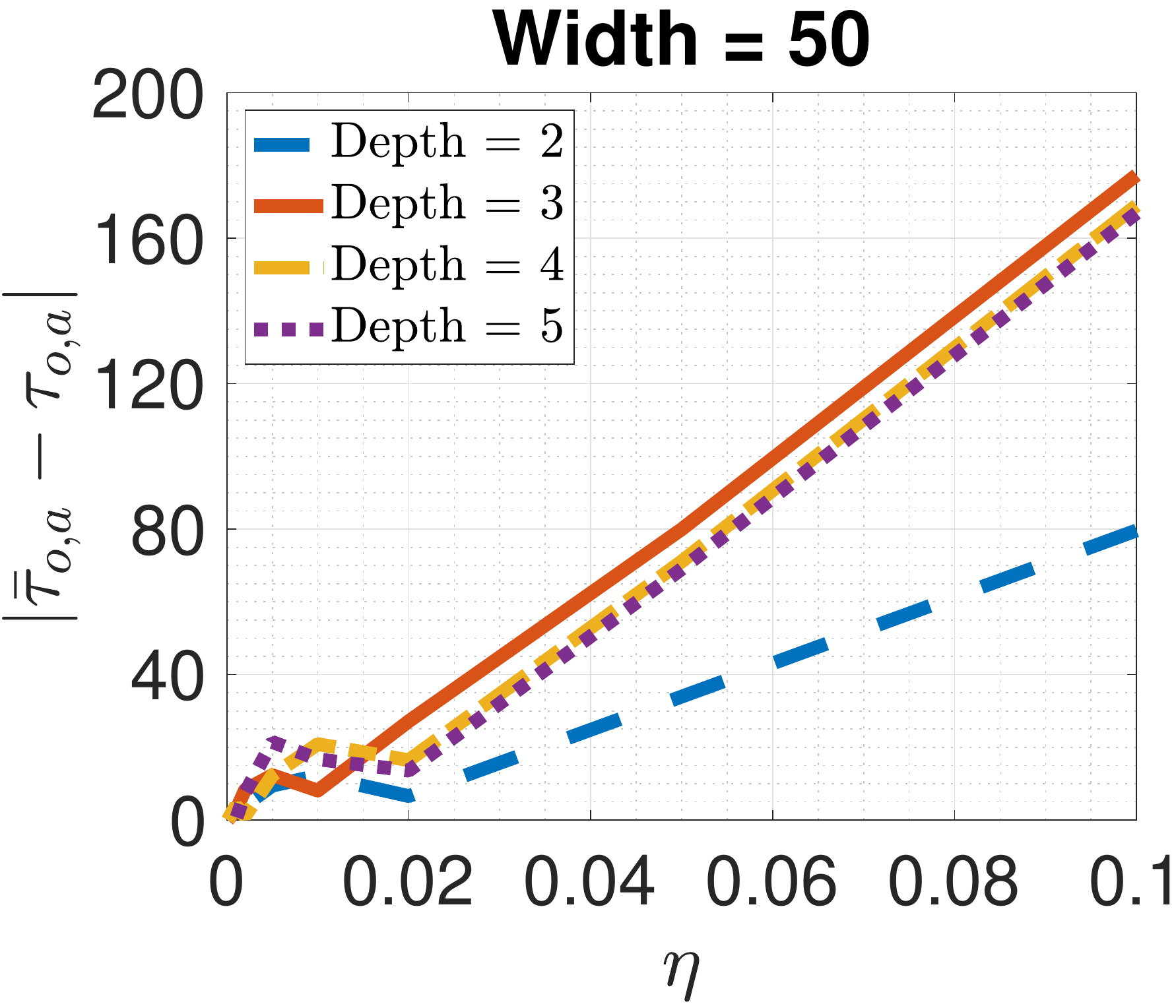}
    \vspace{-10pt}
    \caption{Results of the neural network experiments.}
    \vspace{-5pt}
    \label{fig:exp_nn}
\end{figure}
In our second set of experiments, we consider the real data setting used in \cite{simsekli_tail_ICML2019}: a multi-layer fully connected neural network with ReLu activations on the MNIST dataset. We adapted the code provided in \cite{simsekli_tail_ICML2019}.
For this model, we followed a similar methodology: we monitored the first exit time by varying the $\eta$, the number of layers (depth), and the number of neurons per layer (width). Since a local minimum is not analytically available, we first trained the networks with SGD until a vicinity of a local minimum is reached with at least 90\% accuracy, then we measured the first exit times with $a = 1$ and $\varepsilon = 0.1$. In order to have a prominent level of gradient noise, we set the mini-batch size $b=10$ and we did not add explicit Gaussian or L\'{e}vy noise. The results are given in Figure~\ref{fig:exp_nn}. We observe that, even with pure gradient noise, the error in the exit time behaves very similarly to the one that we observed in Figure~\ref{fig:experiments}(a), hence supporting our theory. We further observe that, the error has a better dependency when the width and depth are relatively small, whereas the slope of the error increases for larger width and depth. This result shows that, to inherit the metastability properties of the continuous-time SDE, we need to use a smaller $\eta$ as we increase the size of the network. Note that this result does not conflict with Figure~\ref{fig:experiments}(d), since changing the width and depth does not simply change $d$, it also changes the landscape of the problem.

\vspace{-3pt}

\section{Conclusion}

\vspace{-5pt}

We studied SGD under a heavy-tailed gradient noise model, which has been empirically justified for a variety of deep learning tasks. While a continuous-time limit of SGD can be used as a proxy for investigating the metastability of SGD under this model, the system might behave differently once discretized. Addressing this issue, we derived explicit conditions for the step-size such that the discrete-time system can inherit the metastability behavior of its continuous-time limit. We illustrated our results on a synthetic model and neural networks.



\section*{Acknowledgments}

We are grateful to Peter Tankov for providing us the derivations for the Girsanov-like change of measures. This work is partly supported by the French National Research Agency (ANR) as a part of the FBIMATRIX (ANR-16-CE23-0014) project, and by the industrial chair Data science \& Artificial Intelligence from T\'{e}l\'{e}com Paris. Mert Gürbüzbalaban acknowledges support from the grants NSF DMS-1723085 and NSF CCF-1814888. 

\newpage
\bibliographystyle{unsrt}
\bibliography{levy,tail}

\clearpage
\newpage


\clearpage
\newpage
\section{Appendix}
\subsection{Proof of Theorem~\ref{thm:ultimateTheorem}}


\begin{proof}
Note that $(W^1,\ldots,W^K)\in A$ is equivalent to $\bar{\tau}_{0,a}(\varepsilon)>K$. Hence, from Lemma~\ref{lemma:GirsanovBound}, the remaining task is to upper-bound $\mathbb{P}[(W(\eta),\ldots,W(K\eta))\in A]$:
\begin{align*}
\mathbb{P}[(W(\eta),\ldots,W(K\eta))\in A]\leq&\mathbb{P}[(W(\eta),\ldots,W(K\eta))\in A\cap B]+\mathbb{P}[(W(\eta),\ldots,W(K\eta))\in B^c]\\
\leq&\mathbb{P}[\tau_{\bar{\varepsilon},a}(\varepsilon)>K\eta]+\mathbb{P}[(W(\eta),\ldots,W(K\eta))\in B^c],
\end{align*}
and to lower-bound it:
\begin{align*}
\mathbb{P}[(W(\eta),\ldots,W(K\eta))\in A]\geq& \mathbb{P}[\tau_{-\bar{\varepsilon},a}(\varepsilon)>K\eta]-\mathbb{P}[(W(\eta),\ldots,W(K\eta))\in B^c].
\end{align*}
By Lemma \ref{lemma:continuousVsDiscrete}, the final result follows.
\end{proof}

\begin{lemma}\label{lemma:continuousVsDiscrete}
There exist constants $C$, $C_1$ and $C_\alpha$ such that:
\begin{align*}
\mathbb{P}[(W(\eta),\ldots,W(K\eta))\in B^c]\leq& \frac{C_1(K\eta(d\varepsilon+1)+1)^\gamma e^{M\eta}M\eta}{\bar{\varepsilon}} + 1-\Big(1-Cde^{-\bar{\varepsilon}^2e^{-2M\eta}(\varepsilon\sigma)^{-2}/(16d\eta)}\Big)^K\\
&+ 1-\Big(1-C_\alpha d^{1+\alpha/2}\eta e^{\alpha M\eta}\varepsilon^{\alpha}\bar{\varepsilon}^{-\alpha}\Big)^K,
\end{align*}
\end{lemma}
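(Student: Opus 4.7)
The plan is to decompose, for each sub-interval $[k\eta,(k+1)\eta]$, the path increment of the continuous process into its drift, Brownian, and L\'evy components:
\begin{align*}
W(t) - W(k\eta) = -\int_{k\eta}^{t}\nabla f(W(s))\,\mathrm{d}s + \varepsilon\sigma\bigl(B(t)-B(k\eta)\bigr) + \varepsilon\bigl(L^\alpha(t)-L^\alpha(k\eta)\bigr),
\end{align*}
and to control each piece separately. A Gronwall-style estimate (iterated application of $\|\nabla f(W(s))\|\le\|\nabla f(W(k\eta))\|+M\|W(s)-W(k\eta)\|^\gamma$ from \textbf{A}\ref{assump:holderCondition}) absorbs the drift and yields the factor $e^{M\eta}$, so that $\sup_{t\in[k\eta,(k+1)\eta]}\|W(t)-W(k\eta)\|\le e^{M\eta}$ times (drift contribution $+$ noise suprema). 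Consequently, the event $B^c$ is covered by the union of three events: a drift-dominant event, a Brownian sub-interval in which the Brownian increment exceeds a constant fraction of $\bar{\varepsilon}e^{-M\eta}$, and a L\'evy sub-interval in which the L\'evy increment exceeds a constant fraction of $\bar{\varepsilon}e^{-M\eta}$. A union bound then produces three contributions matching the three terms of the lemma.

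For the drift term, one bounds $\sup_t\|\int_{k\eta}^{t}\nabla f(W(s))\,\mathrm{d}s\|\le\eta\sup_s\|\nabla f(W(s))\|$ and, via \textbf{A}\ref{assump:holderCondition}--\textbf{A}\ref{assump:gradientCondition}, $\|\nabla f(W(s))\|\le B+M\|W(s)\|^\gamma$. Markov's inequality then reduces the task to bounding $\mathbb{E}\sup_{s\in[0,K\eta]}\|W(s)\|^\gamma$. The plan is to derive a polynomial-in-$t$ moment bound of the form $(K\eta(d\varepsilon+1)+1)^\gamma$ by applying It\^o's formula to a Lyapunov function such as $(1+\|W(s)\|^2)^{\gamma^2}$, using the dissipativity \textbf{A}\ref{assump:dissipative} to dominate the drift contribution and the standard fractional-moment identities for $\alpha$-stable integrals, which are finite because $2\gamma<\alpha$ by \textbf{A}\ref{assump:holderCondition} and which are precisely the ones encoded in the constants $R_1,R_2$ of \textbf{A}\ref{assump:stepsize}. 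Assembling these estimates yields the first summand.

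For the noise terms, the independence of increments of $B$ and $L^\alpha$ across the disjoint sub-intervals $[k\eta,(k+1)\eta]$, $k=0,\dots,K-1$, makes the probability that none of the $K$ sub-paths escapes a ball of radius $c$ factorize as $(1-p)^K$. For Brownian motion, Doob's maximal inequality applied component-wise together with Gaussian tail concentration gives $p=\mathbb{P}[\sup_{t\in[0,\eta]}\|\varepsilon\sigma B(t)\|>c]\le C d\exp(-c^2/(c'd\eta\varepsilon^2\sigma^2))$; choosing $c$ of order $\bar{\varepsilon}e^{-M\eta}$ produces the second summand. For the L\'evy contribution, the same factorization applies and the weak reflection principle of \cite{bayraktar2015weak} controls the single-interval supremum tail as $\mathbb{P}[\sup_{t\in[0,\eta]}\|\varepsilon L^\alpha(t)\|>c]\le C_\alpha d^{1+\alpha/2}\eta\varepsilon^\alpha c^{-\alpha}$, which produces the third summand with the same choice of $c$.

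The main obstacle is the fractional-moment control $\mathbb{E}\sup_{s\in[0,K\eta]}\|W(s)\|^\gamma$ under only H\"older-continuous drift and L\'evy noise with infinite variance. The usual Lipschitz--Gronwall route is unavailable because $\gamma<1$, and the customary second-moment BDG inequality fails because $\alpha<2$. The plan is to work at the $2\gamma$-moment level (finite since $2\gamma<\alpha$), apply the BDG-type inequality for L\'evy stochastic integrals, and use \textbf{A}\ref{assump:dissipative} to absorb the drift growth of this moment, arriving at the claimed $(K\eta(d\varepsilon+1)+1)^\gamma$ scaling; this step is where the detailed structure of the constants $K_1$--$K_4$ in \textbf{A}\ref{assump:stepsize} originates.
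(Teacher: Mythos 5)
Your plan matches the paper's proof in its essential structure: the same decomposition of the one-step increment into drift, Brownian, and L\'evy contributions; the same linearization of the H\"older term (via $\|x\|^\gamma\le\|x\|+1$) so that Gronwall's lemma can be applied and produce the $e^{M\eta}$ factor; the same union-bound decomposition of $B^c$; and the same factorization of the noise events across disjoint sub-intervals (independence of increments gives the $(1-p)^K$ form) together with a reflection-principle/maximal-inequality bound on each per-sub-interval supremum tail (that is Lemmas~\ref{lemma:boundOfTailOfMotions} and \ref{lemma:boundOfTailOfMotions2} in the paper). The one place where you diverge is in handling the drift moment. You propose to establish $\mathbb{E}\sup_{s\in[0,K\eta]}\|W(s)\|^\gamma\lesssim(K\eta(d\varepsilon+1)+1)^\gamma$ from scratch via an It\^o/Lyapunov argument, and you flag this continuous-time fractional supremum moment as the main obstacle; but after the per-sub-interval Gronwall step, what actually remains in the drift contribution is only the \emph{grid-point maximum} $\max_{0\le k\le K-1}\|W(k\eta)\|^\gamma$, and the paper simply invokes existing discrete-time moment bounds (Lemma 7.1 of Xie--Zhang and Lemma S4 of Nguyen et al.) plus Markov to obtain the first summand. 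Your continuous supremum dominates the grid-point max, so pursuing it is not wrong, but it is strictly harder and not needed; the paper's variant avoids BDG-for-L\'evy machinery entirely. Also a small misattribution: the constants $R_1,R_2$ in \textbf{A}\ref{assump:stepsize} arise in the KL bound of Theorem~\ref{thm:KLbound} (where moments of order $2\gamma^2$ appear), not in this lemma's drift estimate, which instead traces back to the cited moment lemma. Neither point is a gap in the argument; they are places where you would do more work than necessary or where the bookkeeping would not line up exactly with the paper's constants.
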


\begin{proof}
We have for $t\in[k\eta,(k+1)\eta]$,
\begin{align*}
\Vert W(t)-W(k\eta)\Vert\leq&\int_{k\eta}^{t}\Vert\nabla f(W(s))\Vert\mathrm{d}s+\varepsilon\sigma\Vert B(t)-B(k\eta)\Vert + \varepsilon\Vert L^\alpha(t)-L^\alpha(k\eta)\Vert\\
\leq&\int_{k\eta}^{t}\Vert\nabla f(W(s)) - \nabla f(W(k\eta))\Vert\mathrm{d}s+\eta\Vert \nabla f(W(k\eta))\Vert + \varepsilon\sigma\Vert B(t)-B(k\eta)\Vert \\
&+ \varepsilon\Vert L^\alpha(t)-L^\alpha(k\eta)\Vert\\
\leq&\int_{k\eta}^{t}M\Vert W(s) - W(k\eta)\Vert^{\gamma}\mathrm{d}s+\eta(M\Vert W(k\eta)\Vert^{\gamma}+B) + \varepsilon\sigma\Vert B(t)-B(k\eta)\Vert \\
&+ \varepsilon\Vert L^\alpha(t)-L^\alpha(k\eta)\Vert.
\end{align*}
For $\gamma<1$, using that $\Vert W(s) - W(k\eta)\Vert^{\gamma}\leq\Vert W(s) - W(k\eta)\Vert +1$, we get:
\begin{align*}
\Vert W(t)-W(k\eta)\Vert\leq&\int_{k\eta}^{t}M\Vert W(s) - W(k\eta)\Vert\mathrm{d}s+\eta(M\Vert W(k\eta)\Vert^{\gamma}+B+M)\\
& + \varepsilon\sigma\sup_{t\in[k\eta,(k+1)\eta]}\Vert B(t)-B(k\eta)\Vert + \varepsilon\sup_{t\in[k\eta,(k+1)\eta]}\Vert L^\alpha(t)-L^\alpha(k\eta)\Vert.
\end{align*}

Then the Gronwall lemma gives:
\begin{align*}
\sup_{t\in[k\eta,(k+1)\eta]}\Vert W(t)-W(k\eta)\Vert\leq& e^{M\eta}\Big[\eta(M\Vert W(k\eta)\Vert^{\gamma}+B+M) + \varepsilon\sigma\sup_{t\in[k\eta,(k+1)\eta]}\Vert B(t)-B(k\eta)\Vert \\
&+ \varepsilon\sup_{t\in[k\eta,(k+1)\eta]}\Vert L^\alpha(t)-L^\alpha(k\eta)\Vert\Big].
\end{align*}

Hence,
\begin{align*}
\max_{0\leq k\leq K-1}\sup_{t\in[k\eta,(k+1)\eta]}\Vert W(t)-W(k\eta)\Vert\leq& e^{M\eta}\Big[\eta(M\max_{0\leq k\leq K-1}\Vert W(k\eta)\Vert^{\gamma}+B+M)\\
& + \varepsilon\sigma\max_{0\leq k\leq K}\sup_{t\in[k\eta,(k+1)\eta]}\Vert B(t)-B(k\eta)\Vert \\
&+ \varepsilon\max_{0\leq k\leq K-1}\sup_{t\in[k\eta,(k+1)\eta]}\Vert L^\alpha(t)-L^\alpha(k\eta)\Vert\Big].
\end{align*}

By Lemma 7.1 in \cite{xie2017ergodicity}, Lemma S4 in \cite{nguyen2019non} and Markov's inequality, for any $u>0$, we have:
\begin{align*}
\mathbb{P}[\max_{0\leq k\leq K-1}\Vert W(k\eta)\Vert^{\gamma}\geq u]\leq\frac{\mathbb{E}[\max_{0\leq k\leq K-1}\Vert W(k\eta)\Vert^{\gamma}]}{u}\leq \frac{C_1(K\eta(d\varepsilon+1)+1)^\gamma}{u},
\end{align*}
where $C_1$ is a constant independent of $K,\eta,\varepsilon$ and $d$. By Lemma~\ref{lemma:boundOfTailOfMotions2}, we have:
\begin{align*}
\mathbb{P}[\max_{k\in[0,\ldots,K-1]}\sup_{t\in[k\eta,(k+1)\eta]}\Vert B(t)-B(k\eta)\Vert\geq u]\leq 1-\Big(1-Cde^{-u^2/(d\eta )}\Big)^K
\end{align*}
and
\begin{align*}
\mathbb{P}[\max_{k\in[0,\ldots,K-1]}\sup_{t\in[k\eta,(k+1)\eta]}\Vert L^{\alpha}(t)-L^{\alpha}(k\eta)\Vert\geq u]\leq 1-\Big(1-C_\alpha d^{1+\alpha/2}\eta u^{-\alpha}\Big)^K.
\end{align*}

Finally, we get:
\begin{align*}
\mathbb{P}[(W(\eta),\ldots,W(K\eta))\in B^c]\leq&\mathbb{P}[\max_{0\leq k\leq K-1}\sup_{t\in[k\eta,(k+1)\eta]}\Vert W(t)-W(k\eta)\Vert>\bar{\varepsilon}]\\
\leq& \mathbb{P}[e^{M\eta}\eta M\max_{0\leq k\leq K-1}\Vert W(k\eta)\Vert^{\gamma}\geq \bar{\varepsilon}/4]\\
&+\mathbb{P}[e^{M\eta}\eta(B+M)\geq\bar{\varepsilon}/4]\\
&+\mathbb{P}[e^{M\eta}\max_{k\in[0,\ldots,K-1]}\sup_{t\in[k\eta,(k+1)\eta]}\Vert B(t)-B(k\eta)\Vert\geq (\varepsilon\sigma)^{-1}\bar{\varepsilon}/4]\\
&+\mathbb{P}[e^{M\eta}\max_{k\in[0,\ldots,K-1]}\sup_{t\in[k\eta,(k+1)\eta]}\Vert L^{\alpha}(t)-L^{\alpha}(k\eta)\Vert\geq \varepsilon^{-1}\bar{\varepsilon}/4]\\
\leq& \frac{C_1(K\eta(d\varepsilon+1)+1)^\gamma e^{M\eta}M\eta}{\bar{\varepsilon}} + 1-\Big(1-Cde^{-\bar{\varepsilon}^2e^{-2M\eta}(\varepsilon\sigma)^{-2}/(16d\eta)}\Big)^K\\
&+ 1-\Big(1-C_\alpha d^{1+\alpha/2}\eta e^{\alpha M\eta}\varepsilon^{\alpha}\bar{\varepsilon}^{-\alpha}\Big)^K.
\end{align*}
\end{proof}
Now we prove the following lemma.

\begin{lemma}\label{lemma:boundOfTailOfMotions}
There exist constants $C$ and $C_\alpha$ such that:
\begin{align*}
\max_{k\in[0,\ldots,K-1]}\mathbb{P}[\sup_{t\in[k\eta,(k+1)\eta]}\Vert B(t)-B(k\eta)\Vert\geq u]\leq Cde^{-cu^2/(d\eta)}.
\end{align*}

\begin{align*}
\max_{k\in[0,\ldots,K-1]}\mathbb{P}[\sup_{t\in[k\eta,(k+1)\eta]}\Vert L^{\alpha}(t)-L^{\alpha}(k\eta)\Vert\geq u]\leq C_\alpha d^{1+\alpha/2}\eta u^{-\alpha}.
\end{align*}
\end{lemma}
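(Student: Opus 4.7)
The plan is to reduce both bounds to one-dimensional tail estimates on a single time interval of length $\eta$, via (i) stationarity of increments, (ii) a union bound across coordinates, and (iii) a maximal inequality that relates $\sup_{s\in[0,\eta]}|\cdot|$ to the distribution at the endpoint $s=\eta$. The two parts then differ only in which endpoint tail bound is invoked, Gaussian versus $\alpha$-stable.

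First I would observe that since $B$ and $L^{\alpha}$ are both L\'evy processes with stationary independent increments, for every $k$ the process $\{B(k\eta+s)-B(k\eta)\}_{s\in[0,\eta]}$ has the same law as $\{B(s)\}_{s\in[0,\eta]}$, and analogously for $L^{\alpha}$. This removes the maximum over $k$ and leaves us with a single bound of the form $\mathbb{P}\bigl[\sup_{s\in[0,\eta]}\|X(s)\|\geq u\bigr]$, where $X$ is either $B$ or $L^{\alpha}$. Next, since each coordinate is independent and $\|X(s)\|\geq u$ forces some coordinate $i$ to satisfy $|X_i(s)|\geq u/\sqrt{d}$, a union bound gives
\begin{align*}
\mathbb{P}\Bigl[\sup_{s\in[0,\eta]}\|X(s)\|\geq u\Bigr]\leq d\,\mathbb{P}\Bigl[\sup_{s\in[0,\eta]}|X_1(s)|\geq u/\sqrt{d}\Bigr].
\end{align*}

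For the Brownian part, I would apply the reflection principle to the scalar Brownian motion $B_1$, which yields $\mathbb{P}[\sup_{s\in[0,\eta]}|B_1(s)|\geq v]\leq 4\mathbb{P}[B_1(\eta)\geq v]$, and then use the standard Gaussian tail inequality $\mathbb{P}[B_1(\eta)\geq v]\leq e^{-v^2/(2\eta)}$ with $v=u/\sqrt{d}$. Combining these yields a bound of the form $C d\,\exp(-c u^2/(d\eta))$, as required.

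For the $\alpha$-stable part, the reflection principle is no longer available and $L_1^{\alpha}$ has infinite variance, so Doob-type $L^2$ maximal inequalities fail. Instead I would appeal to Etemadi's inequality (valid for any c\`adl\`ag process with independent increments), which gives $\mathbb{P}[\sup_{s\in[0,\eta]}|L_1^{\alpha}(s)|\geq v]\leq 3\sup_{s\in[0,\eta]}\mathbb{P}[|L_1^{\alpha}(s)|\geq v/3]$, and then use the $\alpha$-stable tail asymptotics: since $L_1^{\alpha}(s)\sim \mathcal{S}\alpha\mathcal{S}(s^{1/\alpha})$, we have $\mathbb{P}[|L_1^{\alpha}(s)|\geq w]\leq C_\alpha\, s/w^{\alpha}$ (this follows from the Pareto-like tail of symmetric $\alpha$-stable laws). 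Substituting $v=u/\sqrt{d}$ produces a bound scaling as $d\cdot \eta\cdot(\sqrt{d}/u)^{\alpha}=d^{1+\alpha/2}\eta/u^{\alpha}$, up to an $\alpha$-dependent constant.

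The only nontrivial step is the $\alpha$-stable maximal inequality, because the heavy-tailed, infinite-variance setting rules out the martingale-in-$L^2$ machinery used for Brownian motion; Etemadi's inequality is the right substitute since it only requires independent increments and costs a harmless constant factor of $3$. Everything else is routine scaling and union bounds, and yields exactly the two inequalities as stated, with the constants $C$ and $C_\alpha$ absorbing the numerical factors from the reflection principle, Etemadi's inequality, and the tail constants of the standard Gaussian and $\mathcal{S}\alpha\mathcal{S}(1)$ laws.
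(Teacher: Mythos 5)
Your proposal is essentially correct and follows the same overall structure as the paper's proof: reduce to a single interval of length $\eta$ via stationarity of increments, split into coordinates via a union bound (with the $u/\sqrt{d}$ threshold), and then apply a scalar maximal inequality followed by the appropriate endpoint tail bound (Gaussian or $\mathcal{S}\alpha\mathcal{S}$ scaling). The one place you diverge is the scalar maximal inequality for the $\alpha$-stable part. You assert that ``the reflection principle is no longer available'' and substitute Etemadi's inequality (factor $3$). The paper, however, does use a reflection-type bound for $L^\alpha$ as well: for a \emph{symmetric} L\'evy process, L\'evy's maximal inequality gives
\begin{align*}
\mathbb{P}\Bigl[\sup_{s\in[0,\eta]}|L^\alpha_1(s)|\geq v\Bigr]\leq 2\,\mathbb{P}\bigl[|L^\alpha_1(\eta)|\geq v\bigr],
\end{align*}
which holds despite the jumps because it only needs symmetry and independent increments (it is an inequality, not the exact pathwise reflection identity). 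So your assessment that one must discard the reflection-type bound in the heavy-tailed setting is slightly off; the discontinuities only kill the exact reflection equality, not the one-sided bound. That said, your Etemadi-based argument is perfectly valid, requires no symmetry, and lands in the same place with a marginally worse absorbed constant. Both routes yield $C_\alpha d^{1+\alpha/2}\eta\, u^{-\alpha}$ after the $\eta^{1/\alpha}$ scaling and the $\mathcal{S}\alpha\mathcal{S}$ tail estimate, and your Brownian part matches the paper's reflection-plus-Gaussian-tail argument exactly.
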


\begin{proof}
To prove the results, we begin with the known results for Brownian motion and $\alpha$-stable L\'evy motion:
\begin{align*}
&\mathbb{P}[\vert[B(1)]_i\vert\geq u]\leq Ce^{-u^2},\\
&\mathbb{P}[\vert[L^\alpha(1)]_i\vert\geq u]\leq C_\alpha u^{-\alpha},
\end{align*}
where $C$ and $C_\alpha$ are positive constants, $[B(1)]_i$ and $[L^\alpha(1)]_i$ denote the $i$-th component of the motions respectively, for $i$ from $1$ to $d$. By reflection principle for Brownian motion and $\alpha$-stable L\'evy motion, we have
\begin{align*}
&\mathbb{P}[\sup_{t\in[k\eta,(k+1)\eta]}\vert [B(t)-B(k\eta)]_i\vert\geq u]\leq 2\mathbb{P}[\vert[B(\eta)]_i\vert\geq u]=2\mathbb{P}[\vert[B(1)]_i\vert\geq u/\eta^{1/2}],\\
&\mathbb{P}[\sup_{t\in[k\eta,(k+1)\eta]}\vert [L^{\alpha}(t)-L^{\alpha}(k\eta)]_i\vert\geq u]\leq 2\mathbb{P}[\vert[L^\alpha(\eta)]_i\vert\geq u]=2\mathbb{P}[\vert[L^\alpha(1)]_i\vert\geq u/\eta^{1/\alpha}].
\end{align*}

Since $\sup_{t\in[k\eta,(k+1)\eta]}\Vert B(t)-B(k\eta)\Vert^2\leq\sum_{i=1}^d\sup_{t\in[k\eta,(k+1)\eta]}\vert [B(t)-B(k\eta)]_i\vert^2$, we have
\begin{align*}
\mathbb{P}[\sup_{t\in[k\eta,(k+1)\eta]}\Vert B(t)-B(k\eta)\Vert\geq u]=&\mathbb{P}[\sup_{t\in[k\eta,(k+1)\eta]}\Vert B(t)-B(k\eta)\Vert^2\geq u^2]\\
\leq&\sum_{i=1}^d\mathbb{P}[\sup_{t\in[k\eta,(k+1)\eta]}\vert [B(t)-B(k\eta)]_i\vert^2\geq u^2/d]\\
\leq& \sum_{i=1}^d2\mathbb{P}[\vert[B(1)]_i\vert\geq u/(d\eta)^{1/2}]\\
\leq& 2Cde^{-u^2/(d\eta )}.
\end{align*}

Similarly, we have
\begin{align*}
\mathbb{P}[\sup_{t\in[k\eta,(k+1)\eta]}\Vert L^\alpha(t)-L^\alpha(k\eta)\Vert\geq u]\leq& \sum_{i=1}^d2\mathbb{P}[\vert[L^\alpha(1)]_i\vert\geq u/(d^{1/2}\eta^{1/\alpha})]\\
\leq& 2C_\alpha d^{1+\alpha/2}\eta u^{-\alpha}.
\end{align*}

The constants $C$ and $C_\alpha$ do not depend on $k$, hence we have the conclusion.
\end{proof}

\begin{lemma}\label{lemma:boundOfTailOfMotions2}
The following estimates hold:
\begin{align*}
\mathbb{P}[\max_{k\in[0,\ldots,K-1]}\sup_{t\in[k\eta,(k+1)\eta]}\Vert B(t)-B(k\eta)\Vert\geq u]\leq 1-\Big(1-Cde^{-u^2/(d\eta )}\Big)^K
\end{align*}
\begin{align*}
\mathbb{P}[\max_{k\in[0,\ldots,K-1]}\sup_{t\in[k\eta,(k+1)\eta]}\Vert L^{\alpha}(t)-L^{\alpha}(k\eta)\Vert\geq u]\leq 1-\Big(1-C_\alpha d^{1+\alpha/2}\eta u^{-\alpha}\Big)^K.
\end{align*}
\end{lemma}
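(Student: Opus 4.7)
\textbf{Proof proposal for Lemma~\ref{lemma:boundOfTailOfMotions2}.} The plan is to combine the single-interval tail bounds already established in Lemma~\ref{lemma:boundOfTailOfMotions} with the independence of increments property enjoyed by both the Brownian motion and the $\alpha$-stable L\'{e}vy motion. The crucial observation is that for each $k\in\{0,\ldots,K-1\}$, the random variable $\sup_{t\in[k\eta,(k+1)\eta]}\Vert B(t)-B(k\eta)\Vert$ is a measurable functional of the increment process $\{B(t)-B(k\eta):t\in[k\eta,(k+1)\eta]\}$. Since disjoint increments of Brownian motion are mutually independent (and the same holds for the L\'{e}vy motion by property $(i)$ in Section~\ref{sec:bg}), the family of random variables indexed by $k$ is independent. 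The same reasoning applies verbatim to the L\'{e}vy case.

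First I would rewrite the tail event of the maximum in the complementary form:
\begin{align*}
\mathbb{P}\Big[\max_{0\leq k\leq K-1}\sup_{t\in[k\eta,(k+1)\eta]}\Vert B(t)-B(k\eta)\Vert\geq u\Big] = 1 - \prod_{k=0}^{K-1}\mathbb{P}\Big[\sup_{t\in[k\eta,(k+1)\eta]}\Vert B(t)-B(k\eta)\Vert < u\Big],
\end{align*}
using the independence of the $K$ events. Each factor equals $1-p_k$, where $p_k\triangleq \mathbb{P}[\sup_{t\in[k\eta,(k+1)\eta]}\Vert B(t)-B(k\eta)\Vert\geq u]$ is bounded above by $Cde^{-u^2/(d\eta)}$ by Lemma~\ref{lemma:boundOfTailOfMotions}. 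Consequently $1-p_k\geq 1-Cde^{-u^2/(d\eta)}$ for every $k$, yielding
\begin{align*}
\mathbb{P}\Big[\max_{0\leq k\leq K-1}\sup_{t\in[k\eta,(k+1)\eta]}\Vert B(t)-B(k\eta)\Vert\geq u\Big] \leq 1 - \Big(1-Cde^{-u^2/(d\eta)}\Big)^K,
\end{align*}
which is exactly the first inequality. Repeating the argument with the bound $C_\alpha d^{1+\alpha/2}\eta u^{-\alpha}$ for the L\'{e}vy motion yields the second inequality.

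There is no real obstacle here: the only substantive input beyond Lemma~\ref{lemma:boundOfTailOfMotions} is the stationarity and independence of increments, which is built into the definitions of $B$ and $L^{\alpha}$. The one minor subtlety is that $p_k$ may not be identical across $k$ if one wanted to track it sharply, but since we have the uniform upper bound from Lemma~\ref{lemma:boundOfTailOfMotions} (and indeed by stationarity of increments the law of $\sup_{t\in[k\eta,(k+1)\eta]}\Vert B(t)-B(k\eta)\Vert$ does not depend on $k$), the product bound cleanly collapses to the $K$-th power. The requirement that the bound $Cde^{-u^2/(d\eta)}$ (respectively $C_\alpha d^{1+\alpha/2}\eta u^{-\alpha}$) be at most $1$ for the factor $1-Cde^{-u^2/(d\eta)}$ to be non-trivial is automatic in the regime of interest (large $u$ relative to $\sqrt{d\eta}$), and otherwise the bound is trivial since the right-hand side is at least $0$.
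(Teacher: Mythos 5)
Your proposal is correct and follows essentially the same route as the paper: pass to the complement, factor the probability using independence of increments, and apply the per-interval bound from Lemma~\ref{lemma:boundOfTailOfMotions} to each factor before collapsing the product to a $K$-th power. The additional remarks about stationarity and the trivial regime where the per-interval bound exceeds $1$ are sound but not needed for the argument.
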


\begin{proof}
We have
\begin{align*}
\mathbb{P}[\max_{k\in[0,\ldots,K-1]}\sup_{t\in[k\eta,(k+1)\eta]}\Vert B(t)-B(k\eta)\Vert\geq u]=& 1-\mathbb{P}[\max_{k\in[0,\ldots,K-1]}\sup_{t\in[k\eta,(k+1)\eta]}\Vert B(t)-B(k\eta)\Vert< u]\\
=& 1-\prod_{k=0}^{K-1}\mathbb{P}[\sup_{t\in[k\eta,(k+1)\eta]}\Vert B(t)-B(k\eta)\Vert< u]\\
=& 1-\prod_{k=0}^{K-1}\Big(1-\mathbb{P}[\sup_{t\in[k\eta,(k+1)\eta]}\Vert B(t)-B(k\eta)\Vert\geq u]\Big)\\
\leq& 1-\prod_{k=0}^{K-1}\Big(1-Cde^{-u^2/(d\eta )}\Big)\\
=& 1-\Big(1-Cde^{-u^2/(d\eta )}\Big)^K.
\end{align*}

Similarly, we have
\begin{align*}
\mathbb{P}[\max_{k\in[0,\ldots,K-1]}\sup_{t\in[k\eta,(k+1)\eta]}\Vert L^\alpha(t)-L^\alpha(k\eta)\Vert\geq u]\leq& 1-\Big(1-C_\alpha d^{1+\alpha/2}\eta u^{-\alpha}\Big)^K.
\end{align*}
\end{proof}

\begin{lemma}\label{lemma:GirsanovBound} Suppose that assumptions \textbf{A}\ref{assump:holderCondition} and \textbf{A}\ref{assump:gradientCondition} hold. Then, for any $\delta>0$, we have:
\begin{align*}
\mathbb{P}[(W(\eta),\ldots,W(K\eta))\in A] - \delta\leq\mathbb{P}[(\hat{W}(\eta),\ldots,\hat{W}(K\eta))\in A]\leq\mathbb{P}[(W(\eta),\ldots,W(K\eta))\in A] + \delta,
\end{align*}
provided that
\begin{align*}
0<\eta\leq\min\Big\{1,\frac{m}{M^2}, \Big(\frac{\delta^2}{2K_1t^2}\Big)^{\frac{1}{\gamma^2+2\gamma-1}}, \Big(\frac{\delta^2}{2K_2t^2}\Big)^{\frac{1}{2\gamma}}, \Big(\frac{\delta^2}{2K_3t^2}\Big)^{\frac{\alpha}{2\gamma}}, \Big(\frac{\delta^2}{2K_4t^2}\Big)^{\frac{1}{\gamma}}\Big\},
\end{align*}
\end{lemma}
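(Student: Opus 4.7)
The plan is to pipe the KL bound from Theorem~\ref{cor:KLbound} through Pinsker's inequality and an optimal coupling argument. The step-size condition stated in the lemma is identical to the one imposed in \textbf{A}\ref{assump:stepsize}, so Theorem~\ref{cor:KLbound} is directly applicable and delivers $\KL(\hat{\mu}_{K\eta},\mu_{K\eta})\leq 2\delta^{2}$. All that remains is pure measure-theoretic plumbing: transfer this divergence bound into a difference of probabilities of the $K$-dimensional cylinder event $\{(w^{1},\ldots,w^{K})\in A\}$.

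Concretely, I would invoke Pinsker's inequality in the form used in the proof overview, $\|\mu-\nu\|_{TV}\le\sqrt{\KL(\nu,\mu)/2}$, to obtain $\|\mu_{K\eta}-\hat{\mu}_{K\eta}\|_{TV}\le\delta$. Let $\mathbf{M}$ be the optimal coupling of the two path laws on $[0,K\eta]$ as introduced in the overview, so that $\mathbb{P}_{\mathbf{M}}[(W(\eta),\ldots,W(K\eta))\neq(\hat{W}(\eta),\ldots,\hat{W}(K\eta))]\leq \|\mu_{K\eta}-\hat{\mu}_{K\eta}\|_{TV}\leq\delta$. Setting $E=\{(W(\eta),\ldots,W(K\eta))\in A\}$ and $\hat E=\{(\hat{W}(\eta),\ldots,\hat{W}(K\eta))\in A\}$, the identity $\mathbb{P}[E]-\mathbb{P}[\hat E]=\mathbb{P}_{\mathbf{M}}[E\setminus\hat E]-\mathbb{P}_{\mathbf{M}}[\hat E\setminus E]$ forces $|\mathbb{P}[E]-\mathbb{P}[\hat E]|\leq \mathbb{P}_{\mathbf{M}}[(W(\eta),\ldots,W(K\eta))\neq(\hat W(\eta),\ldots,\hat W(K\eta))]\leq\delta$, and the two inequalities of the lemma drop out simultaneously. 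An equivalent route consists of pushing the TV bound through the evaluation map $\pi:(w(s))_{s\in[0,K\eta]}\mapsto(w(\eta),\ldots,w(K\eta))$ via the data-processing inequality for TV distance; the conclusion is the same.

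The genuine analytical labor is already absorbed into Theorem~\ref{cor:KLbound}, which in turn relies on the Girsanov-like KL formula \eqref{eqn:girsanov} (whence the necessity of \textbf{A}\ref{assump:novikov}) together with interval-by-interval control of $\mathbb{E}\|\nabla f(\hat{W}(s))-\nabla f(\hat{W}(k\eta))\|^{2}$ using the H\"older hypothesis \textbf{A}\ref{assump:holderCondition}, the gradient bound \textbf{A}\ref{assump:gradientCondition}, dissipativity \textbf{A}\ref{assump:dissipative}, and moment estimates for the Brownian and $\alpha$-stable increments (which are precisely what produce the $R_{1},R_{2}$ factors and the constants $K_{1},\ldots,K_{4}$ calibrating the four power-of-$\eta$ thresholds). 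The only subtlety proper to the present lemma is ensuring that the optimal coupling and the identity $\mathbb{P}_{\mathbf{M}}[W\neq\hat{W}]=\|\mu-\hat{\mu}\|_{TV}$ are valid on the Skorokhod space of c\`adl\`ag trajectories on $[0,K\eta]$ (rather than on continuous paths, as in the Brownian case); this is a standard fact, cited via \cite{lindvall2002lectures} in the overview, and is the only place where the jump structure of $L^{\alpha}$ needs a moment of attention. Granting it, the proof of Lemma~\ref{lemma:GirsanovBound} is just the two-line composition ``Theorem~\ref{cor:KLbound} $+$ Pinsker $+$ coupling'' described above.
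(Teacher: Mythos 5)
Your proposal is correct and follows essentially the same route as the paper: invoke Theorem~\ref{cor:KLbound} to get $\KL(\hat{\mu}_{K\eta},\mu_{K\eta})\leq 2\delta^2$, convert to a TV bound via Pinsker's inequality, and then transfer to the event probabilities via the optimal coupling $\mathbf{M}$ and the fact that agreement of the full paths on $[0,K\eta]$ implies agreement of the $K$-tuple of discrete evaluations. The paper arranges these same three ingredients in the same order, so there is no substantive difference; your remark about the coupling needing to live on Skorokhod (c\`adl\`ag) path space is a valid observation that the paper leaves implicit in its citation of \cite{lindvall2002lectures}.
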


\begin{proof}
By optimal coupling between two probability measure (\cite{lindvall2002lectures}, Theorem 5.2), there exists a coupling $\mathbf{M}$ of $(W(s))_{0\leq s\leq K\eta}$ and $(\hat{W}(s))_{0\leq s\leq K\eta}$ such that
\begin{align*}
\mathbb{P}_{\mathbf{M}}[(W(s))_{0\leq s\leq K\eta}\not=(\hat{W}(s))_{0\leq s\leq K\eta}]=\Vert \mu_{K\eta}-\hat{\mu}_{K\eta}\Vert_{TV},
\end{align*}
where $TV$ denotes the total variation distance. By Pinsker's inequality, we also have
\begin{align*}
\Vert \mu_{K\eta}-\hat{\mu}_{K\eta}\Vert^2_{TV}\leq\frac{1}{2}\KL(\hat{\mu}_{K\eta},\mu_{K\eta}).
\end{align*}

Then,
\begin{align*}
\mathbb{P}_{\mathbf{M}}[(W(\eta),\ldots,W(K\eta))\not=(\hat{W}(\eta),\ldots,\hat{W}(K\eta))]\leq&\mathbb{P}_{\mathbf{M}}[(W(s))_{0\leq s\leq K\eta}\not=(\hat{W}(s))_{0\leq s\leq K\eta}]\\
\leq&\Big(\frac{1}{2}\KL(\hat{\mu}_{K\eta},\mu_{K\eta})\Big)^{1/2}.
\end{align*}

From the following inequalities
\begin{align*}
&\mathbb{P}_{\mathbf{M}}[(W(\eta),\ldots,W(K\eta))\in A] - \mathbb{P}_{\mathbf{M}}[(W(\eta),\ldots,W(K\eta))\not=(\hat{W}(\eta),\ldots,\hat{W}(K\eta))]\leq\mathbb{P}_{\mathbf{M}}[(\hat{W}(\eta),\ldots,\hat{W}(K\eta))\in A]\\
&\mathbb{P}_{\mathbf{M}}[(\hat{W}(\eta),\ldots,\hat{W}(K\eta))\in A]\leq\mathbb{P}_{\mathbf{M}}[(W(\eta),\ldots,W(K\eta))\in A] + \mathbb{P}_{\mathbf{M}}[(W(\eta),\ldots,W(K\eta))\not=(\hat{W}(\eta),\ldots,\hat{W}(K\eta))],
\end{align*}
we arrive at
\begin{align*}
&\mathbb{P}[(W(\eta),\ldots,W(K\eta))\in A] - \Big(\frac{1}{2}\KL(\hat{\mu}_{K\eta},\mu_{K\eta})\Big)^{1/2}\leq\mathbb{P}[(\hat{W}(\eta),\ldots,\hat{W}(K\eta))\in A]\\
&\mathbb{P}[(\hat{W}(\eta),\ldots,\hat{W}(K\eta))\in A]\leq\mathbb{P}[(W(\eta),\ldots,W(K\eta))\in A] + \Big(\frac{1}{2}\KL(\hat{\mu}_{K\eta},\mu_{K\eta})\Big)^{1/2}.
\end{align*}
By Theorem~\ref{cor:KLbound}, we have the desired inequalities.
\end{proof}

\subsection{Proof of Theorem~\ref{cor:KLbound}}

\subsubsection{A Girsanov-Type Change of Measures}



In this section we will derive a Girsanov-type change of measure \cite{oksendal2005applied,tankov2003financial} for the SDE considered in \eqref{eqn-general-sde}. Let $\mathbb P$ denotes the law of $W(t)$ and $\mathbb Q$ be an equivalent measure defined by 
\begin{align}
    \frac{d\mathbb Q}{ d\mathbb P}\Big|_{\mathcal F_T} =
\exp\left(\int_0^T \phi_t dB_t - \frac{1}{2}\int_0^T \phi^2_t \mathrm{d}t\right),
\end{align}
where  ${\mathcal F_T}$ denotes the filtration upto time $T$. Then the process $B^\phi$ defined by  $B^\phi(t) = B(t) - \int_0^t \phi_s \mathrm{d}s$
is a $\mathbb Q$-Brownian motion. With the choice of $\phi_t$ given in \textbf{A}\ref{assump:novikov}, we see that $W$ satisfies
$
dW(t) = b({W}) dt + \varepsilon \sigma dB^\phi (t) + \varepsilon dL^{\alpha}(t).
$
Since this equation has a unique solution (constructed explicitly with the Euler scheme), we conclude that $W$ has the same law under
$\mathbb Q$ as $\hat{W}$ under $\mathbb P$.

We thus have: 
\begin{align}
\KL(\hat \mu_t,\mu_t) =\KL(\mathbb P_t,\mathbb Q_t) =
\mathbb E^{\mathbb P}\left[\log \frac{d\mathbb P}{d\mathbb Q}\Big|_{\mathcal
  F_t} \right] = \frac{1}{2 \varepsilon^2 \sigma^2}\mathbb E^{\mathbb P}\left[\int_0^t
\|b(\hat{W}) + \nabla f(\hat{W}(s))\|^2 \mathrm{d}s\right]
\end{align}
By using the same steps of the proof of \cite{raginsky17a}[Lemma 3.6], we obtain
\begin{align}
\KL(\hat{\mu}_t,\mu_t) &= \frac{1}{2\varepsilon^2 \sigma^2} \sum_{j=0}^{k-1} \int_{j\eta}^{(j+1)\eta} \E \| \nabla f(\hat{W}(s)) - \nabla f( \hat{W}(j\eta)) \|^2 \>\mathrm{d}s \\
&\leq \frac{M^2}{2\varepsilon^2 \sigma^2} \sum_{j=0}^{k-1} \int_{j\eta}^{(j+1)\eta} \E \| \hat{W}(s) - \hat{W}(j\eta) \|^{2\gamma} \>\mathrm{d}s .\label{eqn:GirsanovKL}
\end{align}

\subsubsection{KL Bound for the Discretized Process}

\begin{theorem} 
\label{thm:KLbound}
Under assumptions \textbf{A}\ref{assump:holderCondition} and \textbf{A}\ref{assump:gradientCondition} we have, for $0<\eta\leq\min\{1,\frac{m}{M^2}\}$,
\begin{align*}
\KL(\hat{\mu}_t,\mu_t)\leq&\frac{M^2 3^\gamma}{2\varepsilon^2 \sigma^2} k\eta  \Bigg(C M^{2\gamma} \eta^{2\gamma} \Bigl( \mathbb{E}\Vert \hat{W}(0)\Vert^{2\gamma^2}\\
& + \frac{k-1}{2}\Big((2\eta (b+m))^{\gamma^2} +2^{\gamma^2}(\eta B)^{2\gamma^2} + \varepsilon^{2\gamma^2}\eta^{\frac{2\gamma^2}{\alpha}}d\Big(\frac{2^{2\gamma^2}\Gamma((1+2\gamma^2)/2)\Gamma(1-2\gamma^2/\alpha)}{\Gamma(1/2)\Gamma(1-\gamma^2)}\Big)\\
& + \varepsilon^{2\gamma^2}\eta^{\gamma^2}d\Big(2^{\gamma^2}\frac{\Gamma\left(\frac{2\gamma^2+1}{2}\right)}{\sqrt{\pi}}\Big)\Big)  + \frac{B^{2}}{M^{2}} \Bigr) + (\varepsilon\eta^{1/\alpha})^{2\gamma}d^{2\gamma}\Big(\frac{2^{2\gamma}\Gamma((1+2\gamma)/2)\Gamma(1-2\gamma/\alpha)}{\Gamma(1/2)\Gamma(1-\gamma)}\Big)\\
& + (\varepsilon\eta^{1/2})^{2\gamma}d^{2\gamma}\Big(2^{\gamma}\frac{\Gamma\left(\frac{2\gamma+1}{2}\right)}{\sqrt{\pi}}\Big)  \Bigg)\\
\leq& K_1k^2\eta^{1+2\gamma+\gamma^2} + K_2k\eta^{1+2\gamma} + K_3k\eta^{1+\frac{2\gamma}{\alpha}} + K_4k\eta^{1+\gamma},
\end{align*}
where
\begin{align*}
K_1\triangleq& \frac{CM^{2+2\gamma} 3^\gamma}{ \varepsilon^2 \sigma^2}\max\Big\{(2 (b+m))^{\gamma^2}, 2^{\gamma^2} B^{2\gamma^2}, \varepsilon^{2\gamma^2}d\Big(\frac{2^{2\gamma^2}\Gamma((1+2\gamma^2)/2)\Gamma(1-2\gamma^2/\alpha)}{\Gamma(1/2)\Gamma(1-\gamma^2)}\Big),\\
&\varepsilon^{2\gamma^2}d\Big(2^{\gamma^2}\frac{\Gamma\left(\frac{2\gamma^2+1}{2}\right)}{\sqrt{\pi}}\Big)\Big)\Big\},\\
K_2\triangleq& \frac{CM^{2+2\gamma} 3^\gamma}{2\varepsilon^2 \sigma^2}\Big(\mathbb{E}\Vert \hat{W}(0)\Vert^{2\gamma^2} + \frac{B^{2}}{M^{2}}\Big),\\
K_3\triangleq& \frac{M^2 3^\gamma \varepsilon^{2\gamma-2} d^{2\gamma}}{2\sigma^2}\Big(\frac{2^{2\gamma}\Gamma((1+2\gamma)/2)\Gamma(1-2\gamma/\alpha)}{\Gamma(1/2)\Gamma(1-\gamma)}\Big),\\
K_4\triangleq& \frac{M^2 3^\gamma \varepsilon^{2\gamma-2}d^{2\gamma}}{2\sigma^2}\Big(2^{\gamma}\frac{\Gamma\left(\frac{2\gamma+1}{2}\right)}{\sqrt{\pi}}\Big).
\end{align*}
\end{theorem}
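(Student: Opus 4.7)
\medskip

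\noindent\textbf{Proof plan.} Starting from the Girsanov identity \eqref{eqn:GirsanovKL}, the task reduces to bounding $\E\|\hat{W}(s)-\hat{W}(j\eta)\|^{2\gamma}$ uniformly for $s\in[j\eta,(j+1)\eta]$ and $0\le j\le k-1$. On such an interval the drift in the definition of $\hat W$ is frozen at $\hat{W}(j\eta)=W^j$, so
\[
\hat{W}(s)-\hat{W}(j\eta) \;=\; -(s-j\eta)\,\nabla f(W^j)+\varepsilon\sigma\bigl(B(s)-B(j\eta)\bigr)+\varepsilon\bigl(L^\alpha(s)-L^\alpha(j\eta)\bigr).
\]
The first step is to apply a power-mean inequality to $\|\cdot\|^{2\gamma}$ on this three-term decomposition (giving the factor $3^\gamma$ visible in the statement), which reduces the problem to bounding the $2\gamma$-th moment of each term separately.

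\medskip

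\noindent\textbf{Three moment computations.} For the Brownian increment, each coordinate of $B(s)-B(j\eta)$ is $\mathcal{N}(0,s-j\eta)$, and the bound $\|B(s)-B(j\eta)\|^{2\gamma}\le d^{2\gamma-1}\sum_i|B_i(s)-B_i(j\eta)|^{2\gamma}$ together with the Gaussian absolute-moment formula $\E|\mathcal N(0,\tau)|^{2\gamma}=\tau^{\gamma}2^\gamma\Gamma(\tfrac{2\gamma+1}{2})/\sqrt{\pi}$ produces exactly the $K_4$-type term with factor $(\varepsilon\eta^{1/2})^{2\gamma}d^{2\gamma}$. For the L\'evy increment the analogous bound on $\|L^\alpha(s)-L^\alpha(j\eta)\|^{2\gamma}$ combined with the scalar stable-moment formula
\[
\E|S\alpha S(1)|^{2\gamma}=\frac{2^{2\gamma}\Gamma((1+2\gamma)/2)\Gamma(1-2\gamma/\alpha)}{\Gamma(1/2)\Gamma(1-\gamma)},
\]
which is finite because \textbf{A}\ref{assump:holderCondition} forces $2\gamma<\alpha$, yields the $K_3$-type contribution $(\varepsilon\eta^{1/\alpha})^{2\gamma}d^{2\gamma}$. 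For the drift term I will use the H\"older bound $\|\nabla f(W^j)\|\le B+M\|W^j\|^\gamma$ (from \textbf{A}\ref{assump:holderCondition}--\textbf{A}\ref{assump:gradientCondition}) followed by another power-mean step, reducing the drift contribution to $\eta^{2\gamma}(B^{2\gamma}+M^{2\gamma}\E\|W^j\|^{2\gamma^2})$.

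\medskip

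\noindent\textbf{The main obstacle: a uniform moment bound on $W^j$.} What remains is controlling $\max_{0\le j\le k-1}\E\|W^j\|^{2\gamma^2}$. I plan a one-step recursion on $\E\|W^{j+1}\|^{2\gamma^2}$ obtained by expanding the SGD update \eqref{eqn-sgd-discrete-model}, taking inner products, and invoking the dissipativity assumption \textbf{A}\ref{assump:dissipative} to produce a contractive term $-2\eta(m\|W^j\|^{1+\gamma}-b)$ that dominates the quadratic $\eta^2\|\nabla f(W^j)\|^2$ term precisely when $\eta\le m/M^2$. The heavy-tailed noise contributes a bounded per-step increment because the exponent $2\gamma^2<\gamma<\alpha$ keeps the relevant $\alpha$-stable moment finite, and the constants $(2(b+m))^{\gamma^2}$, $2^{\gamma^2}B^{2\gamma^2}$, $R_1$, $R_2$ in $K_1$ come directly from tracking the Gaussian and stable moment formulas at exponent $2\gamma^2$ rather than $2\gamma$. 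Iterating over $j=0,\dots,k-1$ and using $(1+x)^\beta\le 1+x$ style inequalities (valid since $\eta\le 1$ keeps each per-step increment small) yields $\E\|W^j\|^{2\gamma^2}\le \E\|W^0\|^{2\gamma^2}+(j/2)\cdot(\text{per-step})$, which matches the inner parenthesis in the theorem exactly.

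\medskip

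\noindent\textbf{Assembly.} Plugging these bounds back into the sum over $j$ in \eqref{eqn:GirsanovKL}, using that $\int_{j\eta}^{(j+1)\eta}(s-j\eta)^{p}\,\mathrm{d}s\le \eta^{p+1}$ for the Brownian/L\'evy pieces and that the integrand is $\eta$-independent in $s$ for the drift piece, produces the first displayed inequality. To reach the second inequality with the four constants $K_1,\dots,K_4$, I substitute $k\eta = t$ so the term containing $(k-1)\cdot\eta^{\min(2\gamma^2,\,2\gamma^2/\alpha,\,\gamma^2)}$ folds into $k^2\eta^{1+2\gamma+\gamma^2}$, while the constant pieces (involving $\|W^0\|^{2\gamma^2}$ and $B^2/M^2$) absorb into the $K_2$ term, and the Brownian and stable contributions directly yield the $K_4$ and $K_3$ terms respectively. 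The principal delicate point throughout is that $\gamma^2<\gamma<1/2$ forces the use of subadditivity instead of convexity in several places, and one must keep the dimension dependence consistent (\emph{i.e.}\ $d^{2\gamma}$ from coordinate-wise bounding) while ensuring the constraint $\eta\le m/M^2$ is the only step-size restriction used for the moment recursion.
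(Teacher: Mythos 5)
Your plan follows the paper's proof essentially step by step: starting from the Girsanov identity \eqref{eqn:GirsanovKL}, the three-term split with the $3^\gamma$ power-mean factor, coordinate-wise Gaussian and stable moment bounds for the Brownian and L\'evy increments (matching Lemma~\ref{lemma:momentsOfStableDist} and Corollary~\ref{cor:momentsOfGaussian}), the H\"older bound $\|\nabla f(w)\|\le B+M\|w\|^\gamma$, a dissipativity-driven one-step recursion for $\E\|W^j\|^{2\gamma^2}$ valid under $\eta\le m/M^2$ (which is Lemma~\ref{lemma:expecataionBound}), and assembly into $K_1,\dots,K_4$.

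A few details are worth correcting. For the drift term you use $(B+M\|W^j\|^\gamma)^{2\gamma}\le 2^{2\gamma-1}\bigl(B^{2\gamma}+M^{2\gamma}\|W^j\|^{2\gamma^2}\bigr)$, which is valid and more elementary than the paper's $\ell_\gamma$-to-$\ell_{2\gamma^2}$ norm concatenation trick, but it yields $B^{2\gamma}$ where the theorem's $K_2$ contains $CM^{2\gamma}\cdot B^2/M^2$; these differ by a $B$-and-$M$-dependent factor that the unspecified $C$ does not automatically absorb, so you would obtain a slightly different (still correct) $K_2$. Your stated orderings contain two slips: assumption \textbf{A}\ref{assump:holderCondition} forces $\gamma>1/2$, so the correct chain is $1/2<\gamma<2\gamma^2<1<\alpha$, not $2\gamma^2<\gamma$ nor $\gamma<1/2$. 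Fortunately what you actually rely on -- finiteness of the stable moments at exponents $2\gamma<\alpha$ and $2\gamma^2<1<\alpha$, and subadditivity of $x\mapsto x^{2\gamma^2}$ because $2\gamma^2\le1$ -- remains valid. Finally, the factor $(k-1)/2$ in the theorem comes from $\sum_{j=0}^{k-1}j=\tfrac{k(k-1)}{2}$ in the outer sum over time slices, not from a $j/2$ appearing inside the per-step recursion: the recursion itself gives a bound linear in $j$, and the $1/2$ emerges only after summation.
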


\begin{proof}
Let us consider the term $\hat{W}(s) - \hat{W}(j\eta) $, for $s\in[j\eta,(j+1)\eta]$:
\begin{align}
\hat{W}(s) - \hat{W}(j\eta)  &= -(s-j\eta) \nabla f(\hat{W}(j\eta)) + \varepsilon (L_s - L_{j\eta}) + \varepsilon (B_s - B_{j\eta}) \\
&\triangleq T_1 + T_2 + T_3
\end{align}
Using this equation and \eqref{eqn:GirsanovKL}, we obtain:
\begin{align}
\KL(\hat{\mu}_t,\mu_t) &\leq \frac{M^2 }{2\varepsilon^2 \sigma^2} \sum_{j=0}^{k-1} \int_{j\eta}^{(j+1)\eta} \E \| T_1 +T_2 + T_3 \|^{2\gamma} \>\mathrm{d}s \\
&\leq \frac{M^2 }{2\varepsilon^2 \sigma^2} \sum_{j=0}^{k-1} \int_{j\eta}^{(j+1)\eta} \E \Bigl(\| T_1 +T_2 + T_3 \|^{2} \Bigr)^\gamma \>\mathrm{d}s \\
&\leq \frac{M^2 }{2\varepsilon^2 \sigma^2} \sum_{j=0}^{k-1} \int_{j\eta}^{(j+1)\eta} \E \Bigl( 3 \| T_1\|^2 + 3\|T_2\|^2 + 3\| T_3 \|^{2} \Bigr)^\gamma \>\mathrm{d}s \label{eqn:frac_ineq} \\
&\leq \frac{M^2 3^\gamma}{2\varepsilon^2 \sigma^2} \sum_{j=0}^{k-1} \int_{j\eta}^{(j+1)\eta} \E   \Bigl(\| T_1 \|^{2\gamma} + \|T_2\|^{2\gamma} + \| T_3 \|^{2\gamma}  \Bigr) \>\mathrm{d}s 
\end{align}
where \eqref{eqn:frac_ineq} is obtained from $(a+b)^\gamma \leq a^\gamma+b^\gamma$ since $\gamma \in(0,1)$ and $a,b \geq 0$.

Since $2\gamma>1$, we have by Lemma~\ref{lemma:momentsOfStableDist}
\begin{align*}
\E \| T_2 \|^{2\gamma}=&\E \|\varepsilon(s-j\eta)^{1/\alpha} L^{\alpha}(1))\|^{2\gamma}\\
\leq&(\varepsilon\eta^{1/\alpha})^{2\gamma}\E\Vert L^{\alpha}(1)\Vert^{2\gamma}\\
\leq&(\varepsilon\eta^{1/\alpha})^{2\gamma}d^{2\gamma}\Big(\frac{2^{2\gamma}\Gamma((1+2\gamma)/2)\Gamma(1-2\gamma/\alpha)}{\Gamma(1/2)\Gamma(1-\gamma)}\Big),
\end{align*}
and by Corollary~\ref{cor:momentsOfGaussian},
\begin{align*}
\E \| T_3 \|^{2\gamma}=&\E \|\varepsilon(s-j\eta)^{1/2} B(1))\|^{2\gamma}\\
\leq&(\varepsilon\eta^{1/2})^{2\gamma}\E\Vert B(1)\Vert^{2\gamma}\\
\leq&(\varepsilon\eta^{1/2})^{2\gamma}d^{2\gamma}\left(2^{\gamma}\frac{\Gamma\left(\frac{2\gamma+1}{2}\right)}{\sqrt{\pi}}\right),
\end{align*}

By definition, we have
\begin{align}
\E \| T_1 \|^{2\gamma} &= \E \|(s-j\eta) \nabla f(\hat{W}(j\eta))\|^{2\gamma} \\
&\leq \eta^{2\gamma} \E \| \nabla f(\hat{W}(j\eta))\|^{2\gamma} \\
&\leq \eta^{2\gamma} \E ( M \|\hat{W}(j\eta)\|^\gamma +B) ^{2\gamma} \\
&\leq C M^{2\gamma} \eta^{2\gamma} \E \Biggl( \|\hat{W}(j\eta)\|_\gamma^\gamma + \Bigl(\frac{B^{1/\gamma}}{M^{1/\gamma}} \Bigr)^\gamma \Biggr) ^{2\gamma} \\
&\leq C M^{2\gamma} \eta^{2\gamma} \E \Biggl( \|\hat{W}'(j\eta)\|_\gamma^\gamma \Biggr) ^{2\gamma}
\end{align}
where we used the equivalence of $\ell_p$-norms and $\hat{W}'(j\eta)$ is the concatenation of $\hat{W}(j\eta)$ and $\frac{B^{1/\gamma}}{M^{1/\gamma}}$. We then obtain
\begin{align}
\E \| T_1 \|^{2\gamma} &\leq C M^{2\gamma} \eta^{2\gamma} \E  \|\hat{W}'(j\eta)\|_\gamma^{2\gamma^2} \\
&\leq C M^{2\gamma} \eta^{2\gamma} \E  \|\hat{W}'(j\eta)\|_{2\gamma^2}^{2\gamma^2} \\
&= C M^{2\gamma} \eta^{2\gamma} \E \Bigl( \|\hat{W}(j\eta) \|_{2\gamma^2}^{2\gamma^2}  + \frac{B^{2}}{M^{2}} \Bigr) \\
&\leq  C M^{2\gamma} \eta^{2\gamma} \Bigl( \E  \|\hat{W}(j\eta) \|^{2\gamma^2}  + \frac{B^{2}}{M^{2}} \Bigr).
\end{align}

By combining the above inequalities and Lemma~\ref{lemma:expecataionBound}, we obtain
\begin{align*}
\KL(\hat{\mu}_t,\mu_t) \leq& \frac{M^2 3^\gamma}{2\varepsilon^2 \sigma^2} \sum_{j=0}^{k-1} \int_{j\eta}^{(j+1)\eta} \E   \Bigl(\| T_1 \|^{2\gamma} + \|T_2\|^{2\gamma} + \| T_3 \|^{2\gamma}  \Bigr) \>\mathrm{d}s\\
\leq&\frac{M^2 3^\gamma}{2\varepsilon^2 \sigma^2} \sum_{j=0}^{k-1} \int_{j\eta}^{(j+1)\eta}  \Bigg(C M^{2\gamma} \eta^{2\gamma} \Bigl( \mathbb{E}\Vert \hat{W}(0)\Vert^{2\gamma^2}\\
& + j\Big((2\eta (b+m))^{\gamma^2} +2^{\gamma^2}(\eta B)^{2\gamma^2} + \varepsilon^{2\gamma^2}\eta^{\frac{2\gamma^2}{\alpha}}d\Big(\frac{2^{2\gamma^2}\Gamma((1+2\gamma^2)/2)\Gamma(1-2\gamma^2/\alpha)}{\Gamma(1/2)\Gamma(1-\gamma^2)}\Big)\\
& + \varepsilon^{2\gamma^2}\eta^{\gamma^2}d\Big(2^{\gamma^2}\frac{\Gamma\left(\frac{2\gamma^2+1}{2}\right)}{\sqrt{\pi}}\Big)\Big)  + \frac{B^{2}}{M^{2}} \Bigr) + (\varepsilon\eta^{1/\alpha})^{2\gamma}d^{2\gamma}\Big(\frac{2^{2\gamma}\Gamma((1+2\gamma)/2)\Gamma(1-2\gamma/\alpha)}{\Gamma(1/2)\Gamma(1-\gamma)}\Big)\\
& + (\varepsilon\eta^{1/2})^{2\gamma}d^{2\gamma}\Big(2^{\gamma}\frac{\Gamma\left(\frac{2\gamma+1}{2}\right)}{\sqrt{\pi}}\Big)  \Bigg) \>\mathrm{d}s\\
=&\frac{M^2 3^\gamma}{2\varepsilon^2 \sigma^2} k\eta  \Bigg(C M^{2\gamma} \eta^{2\gamma} \Bigl( \mathbb{E}\Vert \hat{W}(0)\Vert^{2\gamma^2}\\
& + \frac{k-1}{2}\Big((2\eta (b+m))^{\gamma^2} +2^{\gamma^2}(\eta B)^{2\gamma^2} + \varepsilon^{2\gamma^2}\eta^{\frac{2\gamma^2}{\alpha}}d\Big(\frac{2^{2\gamma^2}\Gamma((1+2\gamma^2)/2)\Gamma(1-2\gamma^2/\alpha)}{\Gamma(1/2)\Gamma(1-\gamma^2)}\Big)\\
& + \varepsilon^{2\gamma^2}\eta^{\gamma^2}d\Big(2^{\gamma^2}\frac{\Gamma\left(\frac{2\gamma^2+1}{2}\right)}{\sqrt{\pi}}\Big)\Big)  + \frac{B^{2}}{M^{2}} \Bigr) + (\varepsilon\eta^{1/\alpha})^{2\gamma}d^{2\gamma}\Big(\frac{2^{2\gamma}\Gamma((1+2\gamma)/2)\Gamma(1-2\gamma/\alpha)}{\Gamma(1/2)\Gamma(1-\gamma)}\Big)\\
& + (\varepsilon\eta^{1/2})^{2\gamma}d^{2\gamma}\Big(2^{\gamma}\frac{\Gamma\left(\frac{2\gamma+1}{2}\right)}{\sqrt{\pi}}\Big)  \Bigg).
\end{align*}

By defining the constants $K_1, K_2, K_3$ and $K_4$ as in the statement of the Theorem, we directly have the conclusion.
\end{proof}

\subsubsection{Proof of Theorem~\ref{cor:KLbound}}

\begin{proof}
By Theorem~\ref{thm:KLbound}, we have
\begin{align*}
\KL(\hat{\mu}_t,\mu_t)\leq&K_1k^2\eta^{1+2\gamma+\gamma^2} + K_2k\eta^{1+2\gamma} + K_3k\eta^{1+\frac{2\gamma}{\alpha}} + K_4k\eta^{1+\gamma}.
\end{align*}

We can easily check that, for example, if $0<\eta\leq\Big(\frac{\delta^2}{2K_1t^2}\Big)^{\frac{1}{\gamma^2+2\gamma-1}}$, then $K_1k^2\eta^{1+2\gamma+\gamma^2}\leq\frac{\delta^2}{2}$. By the same arguments, we finally have
\begin{align*}
\KL(\hat{\mu}_t,\mu_t)\leq&\frac{\delta^2}{2}+\frac{\delta^2}{2}+\frac{\delta^2}{2}+\frac{\delta^2}{2}\\
=&2\delta^2.
\end{align*}
This finalizes the proof.
\end{proof}

\subsection{Technical Results}

\begin{lemma}\label{lemma:gradientBound} Under assumptions \textbf{A}\ref{assump:holderCondition} and \textbf{A}\ref{assump:gradientCondition} we have
\begin{align*}
\Vert\nabla f(w)\Vert\leq M\Vert w\Vert^{\gamma} + B, \quad \forall w\in\mathbb{R}^d.
\end{align*}
\end{lemma}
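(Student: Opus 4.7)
The plan is to derive this bound as an immediate consequence of the triangle inequality combined with the two stated assumptions, evaluated at the reference point $0$.

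First, I would write $\nabla f(w) = \bigl(\nabla f(w) - \nabla f(0)\bigr) + \nabla f(0)$ and apply the triangle inequality to obtain
\begin{align*}
\|\nabla f(w)\| \leq \|\nabla f(w) - \nabla f(0)\| + \|\nabla f(0)\|.
\end{align*}
Next, I would bound the first term using the $\gamma$-H\"older continuity of $\nabla f$ from \textbf{A}\ref{assump:holderCondition} with the choice $x = w$ and $y = 0$, yielding $\|\nabla f(w) - \nabla f(0)\| \leq M \|w\|^\gamma$. The second term is bounded directly by \textbf{A}\ref{assump:gradientCondition}, giving $\|\nabla f(0)\| \leq B$. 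Substituting these two estimates yields the claim.

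There is essentially no obstacle: the statement is a one-line consequence of the triangle inequality applied at the origin, together with the two explicit assumptions. The only minor thing to verify is that the Hölder condition in \textbf{A}\ref{assump:holderCondition} is stated for all pairs $x, y \in \mathbb{R}^d$, so the specialization to $y = 0$ is legitimate without any additional work.
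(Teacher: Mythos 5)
Your proof is correct and follows exactly the same route as the paper's: apply the H\"older condition \textbf{A}\ref{assump:holderCondition} with $y=0$, combine with the bound $\|\nabla f(0)\|\leq B$ from \textbf{A}\ref{assump:gradientCondition} via the triangle inequality. No differences to note.
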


\begin{proof}
By assumption \textbf{A}\ref{assump:holderCondition} we have 
\begin{align*}
\Vert\nabla f(w)-\nabla f(0)\Vert\leq M\Vert w-0\Vert^{\gamma}.
\end{align*}
Since $\Vert\nabla f(0)\Vert\leq B$ by assumption \textbf{A}\ref{assump:gradientCondition}, the conclusion follows.
\end{proof}

The next lemma is the result on the moments of the noise $L^{\alpha}(1)$.
\begin{lemma}\label{lemma:momentsOfStableDist}
The quantity $\mathbb{E}\Vert L^{\alpha}(1)\Vert^{\lambda}$ is finite for $0\leq\lambda<\alpha$. For details, we have

(a) If $1<\lambda<\alpha$, then
\begin{align*}
\mathbb{E}\Vert L^{\alpha}(1)\Vert^{\lambda}\leq d^{\lambda}\Big(\frac{2^{\lambda}\Gamma((1+\lambda)/2)\Gamma(1-\lambda/\alpha)}{\Gamma(1/2)\Gamma(1-\lambda/2)}\Big).
\end{align*}

(b) If $0\leq\lambda\leq 1$, then
\begin{align*}
\mathbb{E}\Vert L^{\alpha}(1)\Vert^{\lambda}\leq d\Big(\frac{2^{\lambda}\Gamma((1+\lambda)/2)\Gamma(1-\lambda/\alpha)}{\Gamma(1/2)\Gamma(1-\lambda/2)}\Big).
\end{align*}

\end{lemma}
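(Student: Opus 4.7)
The plan is to reduce the bound on $\mathbb{E}\|L^\alpha(1)\|^\lambda$ to the one-dimensional moment formula for a scalar $\mathcal{S}\alpha\mathcal{S}(1)$ variable and then distribute the dimensional factor differently in the two regimes. Writing $L^\alpha(1) = (X_1,\dots,X_d)$ with $X_i$ i.i.d.\ $\mathcal{S}\alpha\mathcal{S}(1)$ (by property $(ii)$ of the scalar $\alpha$-stable Lévy motion in Section~\ref{sec:bg}, together with the definition of the $d$-dimensional motion with independent components), I would record the standard scalar moment identity
$\mathbb{E}|X_i|^\lambda = c_{\alpha,\lambda} \triangleq \tfrac{2^\lambda \Gamma((1+\lambda)/2)\Gamma(1-\lambda/\alpha)}{\Gamma(1/2)\Gamma(1-\lambda/2)}$,
valid and finite for all $0 \leq \lambda < \alpha$ (Samorodnitsky--Taqqu, Property~1.2.17); finiteness of $\mathbb{E}\|L^\alpha(1)\|^\lambda$ is then immediate from the componentwise bounds below.

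For case (b) with $0\leq \lambda\leq 1$, I would use subadditivity of $x\mapsto x^{\lambda/2}$ on $[0,\infty)$, which is legitimate because $\lambda/2\leq 1/2\leq 1$. This gives
$\|L^\alpha(1)\|^\lambda = \bigl(\sum_{i=1}^d X_i^2\bigr)^{\lambda/2} \leq \sum_{i=1}^d (X_i^2)^{\lambda/2} = \sum_{i=1}^d |X_i|^\lambda$,
and taking expectations together with the scalar formula yields $\mathbb{E}\|L^\alpha(1)\|^\lambda \leq d\, c_{\alpha,\lambda}$, which is exactly the claimed bound.

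For case (a) with $1<\lambda<\alpha$, subadditivity at exponent $\lambda/2$ is no longer available in the ambient bound, so I would instead pass from the $\ell^2$-norm to the $\ell^1$-norm via $\|L^\alpha(1)\|\leq \sum_{i=1}^d |X_i|$, and then apply the power-mean (Jensen) inequality with the convex function $x\mapsto x^\lambda$ on the uniform measure on $\{1,\dots,d\}$:
$\bigl(\sum_{i=1}^d |X_i|\bigr)^\lambda = d^\lambda\bigl(\tfrac1d \sum_{i=1}^d |X_i|\bigr)^\lambda \leq d^{\lambda-1}\sum_{i=1}^d |X_i|^\lambda$.
Taking expectations and using the scalar identity gives $\mathbb{E}\|L^\alpha(1)\|^\lambda \leq d^{\lambda-1}\cdot d\, c_{\alpha,\lambda} = d^\lambda c_{\alpha,\lambda}$, matching the stated bound.

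All the steps are elementary and there is no serious obstacle; the only point worth care is matching the regime to the right inequality, since subadditivity at exponent $\lambda/2$ fails once $\lambda>1$ (forcing the $\ell^2$-to-$\ell^1$ + Jensen route and the dimensional factor $d^\lambda$), whereas for $\lambda\leq 1$ that same route would be wasteful and subadditivity directly yields the sharper factor $d$.
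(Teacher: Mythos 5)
Your proof is correct. In the paper, this lemma is not proved inline at all; the authors simply cite Corollary~S3 of \cite{nguyen2019non}, so you are filling in a proof that the paper delegates to an external reference. Your argument is nevertheless structurally the same as what the paper does for the companion Gaussian bound in Corollary~\ref{cor:momentsOfGaussian}: there the authors also pass from $\|\cdot\|_2$ to $\sum_i |X_i|$ and then split by $\lambda \lessgtr 1$. The only stylistic differences are that for part~(a) the paper reaches the factor $d^{\lambda-1}$ via Minkowski's inequality on the $L^\lambda(\mathbb{P})$-norm, whereas you obtain it pathwise through Jensen on the uniform average (both yield $(\sum_i |X_i|)^\lambda \leq d^{\lambda-1}\sum_i |X_i|^\lambda$ after taking expectations); and for part~(b) the paper uses subadditivity of $x\mapsto x^\lambda$ on the $\ell^1$-sum, whereas you apply subadditivity of $x\mapsto x^{\lambda/2}$ directly to $\sum_i X_i^2$, which is equally valid for $\lambda\leq 1$ (indeed for $\lambda\leq 2$). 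Both routes invoke the same scalar moment identity for $\mathcal{S}\alpha\mathcal{S}(1)$ and are equally elementary; your version has the small advantage of being entirely pathwise before a single application of linearity of expectation.
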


\begin{proof}
This is exactly Corollary S3 in \cite{nguyen2019non}.
\end{proof}

For the moments of the noise $B(1)$, we first have the following lemma.
\begin{lemma}\label{lemma:momentsOfGaussian}
Let $X$ be a scalar standard Gaussian random variable. Then, for $\lambda>-1$, we have
\begin{align*}
\mathbb{E}(\vert X\vert^{\lambda})=2^{\lambda/2}\frac{\Gamma\left(\frac{\lambda+1}{2}\right)}{\sqrt{\pi}},
\end{align*}
where $\Gamma$ denotes the Gamma function.
\end{lemma}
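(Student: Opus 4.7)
The plan is to compute $\mathbb{E}|X|^{\lambda}$ directly from the standard normal density, exploit the evenness of $|x|^{\lambda} e^{-x^{2}/2}$ to reduce the integral to the positive half-line, and then perform a substitution that turns the resulting integral into the standard integral representation of the Gamma function.

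First, I write
\begin{equation*}
\mathbb{E}|X|^{\lambda} \;=\; \int_{-\infty}^{\infty} |x|^{\lambda}\,\frac{1}{\sqrt{2\pi}}\,e^{-x^{2}/2}\,\mathrm{d}x \;=\; \frac{2}{\sqrt{2\pi}}\int_{0}^{\infty} x^{\lambda}\,e^{-x^{2}/2}\,\mathrm{d}x.
\end{equation*}
Next I change variables via $u = x^{2}/2$, so that $x = (2u)^{1/2}$ and $\mathrm{d}x = (2u)^{-1/2}\,\mathrm{d}u$. This substitution yields $x^{\lambda}\,\mathrm{d}x = 2^{(\lambda-1)/2}\, u^{(\lambda-1)/2}\,\mathrm{d}u$, and hence
\begin{equation*}
\int_{0}^{\infty} x^{\lambda}\,e^{-x^{2}/2}\,\mathrm{d}x \;=\; 2^{(\lambda-1)/2}\int_{0}^{\infty} u^{(\lambda+1)/2 - 1}\,e^{-u}\,\mathrm{d}u \;=\; 2^{(\lambda-1)/2}\,\Gamma\!\left(\tfrac{\lambda+1}{2}\right).
\end{equation*}
The condition $\lambda > -1$ is exactly what guarantees $(\lambda+1)/2 > 0$, which in turn ensures the convergence of this Gamma integral at the origin (decay at infinity is automatic from $e^{-u}$).

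Combining the two displays and simplifying the powers of $2$, I get
\begin{equation*}
\mathbb{E}|X|^{\lambda} \;=\; \frac{2}{\sqrt{2\pi}}\cdot 2^{(\lambda-1)/2}\,\Gamma\!\left(\tfrac{\lambda+1}{2}\right) \;=\; \frac{2^{(\lambda+1)/2}}{\sqrt{2}\,\sqrt{\pi}}\,\Gamma\!\left(\tfrac{\lambda+1}{2}\right) \;=\; \frac{2^{\lambda/2}}{\sqrt{\pi}}\,\Gamma\!\left(\tfrac{\lambda+1}{2}\right),
\end{equation*}
which is exactly the claimed identity. There is no real obstacle here: the computation is a standard exercise, and the only subtlety is verifying that $\lambda > -1$ is the sharp condition for integrability of $x^{\lambda} e^{-x^{2}/2}$ near zero (it is). Nothing else in the paper's setup is needed beyond the definition of the Gamma function given at the end of the Notations paragraph.
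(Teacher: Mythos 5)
Your computation is correct, and it differs from the paper's treatment in form: the paper disposes of this lemma by citing equation (17) of Winkelbauer's note on moments of the normal distribution, whereas you supply a self-contained derivation (reduce to the half-line by symmetry, substitute $u = x^2/2$, recognize the Gamma integral, simplify the powers of $2$). Your version is more transparent and makes the role of $\lambda > -1$ explicit as the integrability condition at the origin, at the cost of a few extra lines; the paper's choice just offloads the elementary calculation to a reference. Either is fine for a lemma of this sort, and the substance of the argument is the same standard one the cited reference would give.
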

\begin{proof}
The result is a direct consequence of equation (17) in \cite{winkelbauer2012moments}.
\end{proof}

\begin{corollary}\label{cor:momentsOfGaussian}
The quantity $\mathbb{E}\Vert B(1)\Vert^{\lambda}$ is finite for $\lambda>-1$. For details, we have

(a) If $1<\lambda<\alpha$, then
\begin{align*}
\mathbb{E}\Vert B(1)\Vert^{\lambda}\leq d^{\lambda}\left(2^{\lambda/2}\frac{\Gamma\left(\frac{\lambda+1}{2}\right)}{\sqrt{\pi}}\right).
\end{align*}

(b) If $0\leq\lambda\leq 1$, then
\begin{align*}
\mathbb{E}\Vert B(1)\Vert^{\lambda}\leq d\left(2^{\lambda/2}\frac{\Gamma\left(\frac{\lambda+1}{2}\right)}{\sqrt{\pi}}\right).
\end{align*}

\end{corollary}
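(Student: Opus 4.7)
The plan is to reduce the vector statement to the scalar Gaussian moment formula supplied by Lemma~\ref{lemma:momentsOfGaussian}, using the fact that $B(1)$ is a standard $d$-dimensional Gaussian vector with i.i.d.\ $\mathcal{N}(0,1)$ coordinates. First I would bound the Euclidean norm by the $\ell_1$ norm, i.e.\ $\|B(1)\| \leq \sum_{i=1}^d |[B(1)]_i|$, which lets me pass from a norm of a sum-of-squares to a plain sum of nonnegative random variables whose moments are known explicitly.

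For part (b), with $0\leq\lambda\leq 1$, I would use subadditivity of $t\mapsto t^\lambda$ on $[0,\infty)$, namely $(a_1+\cdots+a_d)^\lambda \leq \sum_i a_i^\lambda$, applied to $a_i=|[B(1)]_i|$. Taking expectations term by term and invoking Lemma~\ref{lemma:momentsOfGaussian} gives
\begin{align*}
\mathbb{E}\|B(1)\|^\lambda \;\leq\; \sum_{i=1}^d \mathbb{E}\bigl|[B(1)]_i\bigr|^\lambda \;=\; d\cdot 2^{\lambda/2}\frac{\Gamma\!\left(\tfrac{\lambda+1}{2}\right)}{\sqrt{\pi}},
\end{align*}
which matches the claimed bound.

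For part (a), with $1<\lambda<\alpha$ (the upper bound on $\lambda$ being inherited from the ambient setting, although any finite $\lambda>1$ would work for a Gaussian), I would instead use the power-mean inequality in the direction $(a_1+\cdots+a_d)^\lambda \leq d^{\lambda-1}\sum_i a_i^\lambda$, which is valid for $\lambda\geq 1$ by Jensen applied to the convex function $t\mapsto t^\lambda$. Again with $a_i=|[B(1)]_i|$ and taking expectations,
\begin{align*}
\mathbb{E}\|B(1)\|^\lambda \;\leq\; d^{\lambda-1}\sum_{i=1}^d \mathbb{E}\bigl|[B(1)]_i\bigr|^\lambda \;=\; d^{\lambda}\cdot 2^{\lambda/2}\frac{\Gamma\!\left(\tfrac{\lambda+1}{2}\right)}{\sqrt{\pi}},
\end{align*}
giving the stated inequality. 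Finiteness of $\mathbb{E}\|B(1)\|^\lambda$ for every $\lambda>-1$ follows at once from the scalar Gaussian moment formula, since the upper bounds above are finite whenever $\Gamma((\lambda+1)/2)$ is, i.e.\ whenever $\lambda>-1$.

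There is essentially no obstacle here: the result is purely a consequence of Lemma~\ref{lemma:momentsOfGaussian} combined with the elementary norm inequalities $\|\cdot\|_2\leq\|\cdot\|_1$ and the two regimes of the map $t\mapsto t^\lambda$ (subadditive for $\lambda\in[0,1]$, superadditive/convex for $\lambda\geq 1$). This exactly mirrors the structure used in Lemma~\ref{lemma:momentsOfStableDist} for the $\alpha$-stable case (Corollary S3 of \cite{nguyen2019non}), which is why the two cases (a) and (b) are split in the same way.
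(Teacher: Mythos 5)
Your proof is correct and follows the same overall structure as the paper's: bound $\|B(1)\|\leq\sum_i|[B(1)]_i|$, then split on $\lambda\lessgtr 1$, and invoke Lemma~\ref{lemma:momentsOfGaussian}. The only (cosmetic) divergence is in case (a): the paper applies Minkowski's inequality in $L^\lambda$ to get $(\mathbb{E}\|B(1)\|^\lambda)^{1/\lambda}\leq\sum_i(\mathbb{E}|[B(1)]_i|^\lambda)^{1/\lambda}=d\,(\mathbb{E}|X|^\lambda)^{1/\lambda}$ and then raises to the $\lambda$th power, whereas you apply the deterministic power-mean inequality $(\sum_i a_i)^\lambda\leq d^{\lambda-1}\sum_i a_i^\lambda$ pointwise and then take expectations. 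These are interchangeable routes to the identical bound $d^\lambda\,\mathbb{E}|X|^\lambda$, so the two arguments are essentially the same.
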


\begin{proof}
Since $B(1)$, by definition, is a d-dimensional vector whose components are i.i.d standard Gaussian random variable $B_i(1)$ for $i\in\{1,\ldots,d\}$, we have
\begin{align*}
\Vert B(1)\Vert\leq&\sum_{i=1}^d \vert B_i(1)\vert
\end{align*}

(a) $1<\lambda<\alpha$. By using Minkowski's inequality and Lemma~\ref{lemma:momentsOfGaussian},
\begin{align*}
(\mathbb{E}\Vert B(1)\Vert^{\lambda})^{1/\lambda}\leq&\Big(\mathbb{E}\Big[\Big(\sum_{i=1}^d \vert B_i(1)\vert\Big)^{\lambda}\Big]\Big)^{1/\lambda}\\
\leq&\sum_{i=1}^d (\mathbb{E}\vert B_i(1)\vert^{\lambda})^{1/\lambda}\\
=&d\left(2^{\lambda/2}\frac{\Gamma\left(\frac{\lambda+1}{2}\right)}{\sqrt{\pi}}\right)^{1/\lambda}.
\end{align*}

Thus, we have
\begin{align*}
\mathbb{E}\Vert B(1)\Vert^{\lambda}\leq d^{\lambda}\left(2^{\lambda/2}\frac{\Gamma\left(\frac{\lambda+1}{2}\right)}{\sqrt{\pi}}\right).
\end{align*}

(b) $0\leq\lambda\leq 1$.
\begin{align*}
\mathbb{E}\Vert B(1)\Vert^{\lambda}\leq&\mathbb{E}\Big[\Big(\sum_{i=1}^d \vert B_i(1)\vert\Big)^{\lambda}\Big]\\
\leq&\sum_{i=1}^d \mathbb{E}\vert B_i(1)\vert^{\lambda}\\
=&d\left(2^{\lambda/2}\frac{\Gamma\left(\frac{\lambda+1}{2}\right)}{\sqrt{\pi}}\right).
\end{align*}

\end{proof}

\begin{lemma}\label{lemma:expecataionBound} For $0<\eta\leq\frac{m}{M^2}$ and $s\in[j\eta,(j+1)\eta)$, we have the following estimates:

(a) If $1<\lambda<\alpha$ then
\begin{align*}
\mathbb{E}\Vert \hat{W}(j\eta)\Vert^{\lambda}\leq& \Bigg(\Big(\mathbb{E}\Vert \hat{W}(0)\Vert^{\lambda}\Big)^{\frac{1}{\lambda}} + j\Big((2\eta (b+m))^{\frac{1}{2}} +2^{\frac{1}{2}}\eta B + \varepsilon\eta^{\frac{1}{\alpha}}d\Big(\frac{2^{\lambda}\Gamma((1+\lambda)/2)\Gamma(1-\lambda/\alpha)}{\Gamma(1/2)\Gamma(1-\lambda/2)}\Big)^{\frac{1}{\lambda}}\\
& + \varepsilon\eta^{\frac{1}{2}}d\Big(2^{\lambda/2}\frac{\Gamma\left(\frac{\lambda+1}{2}\right)}{\sqrt{\pi}}\Big)^{\frac{1}{\lambda}}\Big)\Bigg)^{\lambda}.
\end{align*}

(b) If $0\leq\lambda\leq 1$ then
\begin{align*}
\mathbb{E}\Vert \hat{W}(j\eta)\Vert^{\lambda}\leq &\mathbb{E}\Vert \hat{W}(0)\Vert^{\lambda} + j\Big((2\eta (b+m))^{\frac{\lambda}{2}} +2^{\frac{\lambda}{2}}(\eta B)^{\lambda} + \varepsilon^{\lambda}\eta^{\frac{\lambda}{\alpha}}d\Big(\frac{2^{\lambda}\Gamma((1+\lambda)/2)\Gamma(1-\lambda/\alpha)}{\Gamma(1/2)\Gamma(1-\lambda/2)}\Big)\\
& + \varepsilon^{\lambda}\eta^{\frac{\lambda}{2}}d\Big(2^{\lambda/2}\frac{\Gamma\left(\frac{\lambda+1}{2}\right)}{\sqrt{\pi}}\Big)\Big).
\end{align*}

\end{lemma}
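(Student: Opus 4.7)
The plan is to propagate a one-step moment estimate for the Euler discretization through $j$ iterations. Since $\hat{W}(j\eta) = W^j$ satisfies the recurrence $W^{j+1} = (W^j - \eta\nabla f(W^j)) + \varepsilon\sigma\eta^{1/2}\xi_j + \varepsilon\eta^{1/\alpha}\zeta_j$ with i.i.d.\ $\xi_j \sim \mathcal{N}(0,I)$ and $\zeta_j$ having i.i.d.\ $\mathcal{S}\alpha\mathcal{S}(1)$ components, I would first isolate the drift iterate $W^j - \eta\nabla f(W^j)$ and bound its norm almost surely, then apply either Minkowski (for $\lambda>1$) or subadditivity of $t\mapsto t^\lambda$ (for $\lambda\le 1$) to obtain an additive recursion that telescopes in $j$.

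The first step establishes the almost-sure inequality $\Vert W^j - \eta\nabla f(W^j)\Vert \le \Vert W^j\Vert + (2\eta(b+m))^{1/2} + \sqrt{2}\,\eta B$. I expand $\Vert W^j - \eta\nabla f(W^j)\Vert^2 = \Vert W^j\Vert^2 - 2\eta\langle W^j,\nabla f(W^j)\rangle + \eta^2\Vert\nabla f(W^j)\Vert^2$. The dissipativity assumption \textbf{A}\ref{assump:dissipative} yields $-2\eta\langle W^j,\nabla f(W^j)\rangle \le -2\eta m\Vert W^j\Vert^{1+\gamma} + 2\eta b$, while Lemma~\ref{lemma:gradientBound} combined with $(a+c)^2\le 2a^2+2c^2$ yields $\eta^2\Vert\nabla f(W^j)\Vert^2 \le 2\eta^2 M^2\Vert W^j\Vert^{2\gamma} + 2\eta^2 B^2$. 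Using $\eta\le m/M^2$, the coefficient $2\eta^2 M^2$ is absorbed into $2\eta m$, reducing the state-dependent terms to $2\eta m(\Vert W^j\Vert^{2\gamma} - \Vert W^j\Vert^{1+\gamma})$. Since $2\gamma \le 1+\gamma$ (because $\gamma \le 1$ by \textbf{A}\ref{assump:holderCondition}), a case split on whether $\Vert W^j\Vert \le 1$ or $\Vert W^j\Vert > 1$ bounds this difference by $1$. Hence $\Vert W^j-\eta\nabla f(W^j)\Vert^2 \le \Vert W^j\Vert^2 + 2\eta(b+m) + 2\eta^2 B^2$, and the elementary bounds $\sqrt{a^2+c}\le a+\sqrt{c}$ followed by $\sqrt{c_1+c_2}\le\sqrt{c_1}+\sqrt{c_2}$ deliver the claimed pointwise estimate.

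Next, I convert this pointwise estimate into the desired moment bounds case by case. For part (a) with $1<\lambda<\alpha$, combining the triangle inequality $\Vert W^{j+1}\Vert \le \Vert W^j - \eta\nabla f(W^j)\Vert + \varepsilon\sigma\eta^{1/2}\Vert\xi_j\Vert + \varepsilon\eta^{1/\alpha}\Vert\zeta_j\Vert$ with Minkowski's inequality in $L^\lambda$ and the pointwise bound gives an additive $L^\lambda$ recursion $(\mathbb{E}\Vert W^{j+1}\Vert^\lambda)^{1/\lambda} \le (\mathbb{E}\Vert W^j\Vert^\lambda)^{1/\lambda} + c_\lambda$, where $c_\lambda$ collects $(2\eta(b+m))^{1/2}$, $\sqrt{2}\eta B$, and the noise contributions $\varepsilon\sigma\eta^{1/2}(\mathbb{E}\Vert\xi_j\Vert^\lambda)^{1/\lambda}$ and $\varepsilon\eta^{1/\alpha}(\mathbb{E}\Vert\zeta_j\Vert^\lambda)^{1/\lambda}$, the latter two supplied by Corollary~\ref{cor:momentsOfGaussian} and Lemma~\ref{lemma:momentsOfStableDist} (each contributing the factor $d$). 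Iterating from $j=0$ and raising to the $\lambda$-th power yields (a). For part (b) with $0\le\lambda\le 1$, I instead use the subadditivity $(a_1+a_2+a_3)^\lambda \le a_1^\lambda+a_2^\lambda+a_3^\lambda$ on the triangle inequality, and again on the step-one bound, producing $\Vert W^{j+1}\Vert^\lambda \le \Vert W^j\Vert^\lambda + (2\eta(b+m))^{\lambda/2} + 2^{\lambda/2}(\eta B)^\lambda + (\varepsilon\sigma\eta^{1/2})^\lambda\Vert\xi_j\Vert^\lambda + (\varepsilon\eta^{1/\alpha})^\lambda\Vert\zeta_j\Vert^\lambda$. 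Taking expectation gives an additive recursion directly on $\mathbb{E}\Vert W^j\Vert^\lambda$ that telescopes to the claimed formula after applying the same noise-moment lemmas.

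The main obstacle lies in the first step, where $2\eta^2 M^2\Vert W^j\Vert^{2\gamma}$ must be absorbed into the dissipativity term. This is where both the step-size condition $\eta\le m/M^2$ and the H\"older exponent restriction $\gamma\le 1$ (inherited from \textbf{A}\ref{assump:holderCondition}) must be used jointly; crucially, the resulting correction $2\eta m$ is a \emph{constant} independent of $W^j$, which is what keeps the subsequent moment recursion additive rather than multiplicative and so avoids exponential-in-$j$ blow-up. A secondary piece of bookkeeping is matching the noise-moment constants from Lemma~\ref{lemma:momentsOfStableDist} and Corollary~\ref{cor:momentsOfGaussian} using the scaling identities $B(\eta)\stackrel{d}{=}\eta^{1/2}B(1)$ and $L^\alpha(\eta)\stackrel{d}{=}\eta^{1/\alpha}L^\alpha(1)$, which accounts for the $\eta^{\lambda/2}$ and $\eta^{\lambda/\alpha}$ factors appearing in part (b) and the corresponding $\eta^{1/2}$ and $\eta^{1/\alpha}$ factors inside the $\lambda$-th power in part (a).
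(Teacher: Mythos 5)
Your proposal is correct and follows essentially the same route as the paper's proof: both expand $\Vert \hat{W}(j\eta)-\eta\nabla f(\hat{W}(j\eta))\Vert^2$, invoke dissipativity (\textbf{A}\ref{assump:dissipative}) and Lemma~\ref{lemma:gradientBound} together with $\eta\le m/M^2$ to absorb the $\Vert\cdot\Vert^{2\gamma}$ term into a state-independent constant, and then telescope via Minkowski (for $\lambda>1$) or subadditivity of $t\mapsto t^\lambda$ (for $\lambda\le1$) after appealing to Lemma~\ref{lemma:momentsOfStableDist} and Corollary~\ref{cor:momentsOfGaussian} for the noise moments. The only cosmetic difference is that you first extract a pointwise first-power bound $\Vert\hat W(j\eta)-\eta\nabla f\Vert\le\Vert\hat W(j\eta)\Vert+(2\eta(b+m))^{1/2}+\sqrt{2}\eta B$ by subadditivity of $\sqrt{\cdot}$, whereas the paper keeps the quadratic and applies subadditivity of $t\mapsto t^{\lambda/2}$ directly; these yield identical constants (and you are arguably more careful in retaining the factor $\sigma$ from \eqref{eqn-sgd-discrete-model} that the paper silently drops in this lemma).
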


\begin{proof}
The proof technique is similar to \cite{nguyen2019non}. Let us denote the value $\mathbb{E}\Vert L^{\alpha}(1)\Vert^{\lambda}$ by $l_{\alpha,\lambda,d}<\infty$ and the value $\mathbb{E}\Vert B(1)\Vert^{\lambda}$ by $b_{\lambda,d}<\infty$. Starting from
\begin{align*}
\hat{W}((j+1)\eta) = \hat{W}(j\eta) - \eta \nabla f(\hat{W}(j\eta)) + \varepsilon\eta^{\frac{1}{\alpha}}L^{\alpha}(1) + \varepsilon\eta^{\frac{1}{2}}B(1),
\end{align*}
we have either, by Minkowski, for $\lambda>1$,
\begin{align}\label{eqn:lambdaGreatOne}
\Big(\mathbb{E}\Vert \hat{W}((j+1)\eta)\Vert^\lambda\Big)^{\frac{1}{\lambda}} &\leq\Big(\mathbb{E}\Vert \hat{W}(j\eta) - \eta \nabla f(\hat{W}(j\eta))\Vert^{\lambda}\Big)^{\frac{1}{\lambda}} + \varepsilon\eta^{\frac{1}{\alpha}}\Big(\mathbb{E}\Vert L^{\alpha}(1)\Vert^{\lambda}\Big)^{\frac{1}{\lambda}} + \varepsilon\eta^{\frac{1}{2}}\Big(\mathbb{E}\Vert B(1)\Vert^{\lambda}\Big)^{\frac{1}{\lambda}},
\end{align}
or for $0\leq\lambda\leq 1$),
\begin{align}\label{eqn:lambdaLessOne}
\mathbb{E}\Vert \hat{W}((j+1)\eta)\Vert^\lambda &\leq\mathbb{E}\Vert \hat{W}(j\eta) - \eta \nabla f(\hat{W}(j\eta))\Vert^{\lambda} + \varepsilon^{\lambda}\eta^{\frac{\lambda}{\alpha}}\mathbb{E}\Vert L^{\alpha}(1)\Vert^{\lambda} + \varepsilon^{\lambda}\eta^{\frac{\lambda}{2}}\mathbb{E}\Vert B(1)\Vert^{\lambda}.
\end{align}

Consider the first term on the right side:
\begin{align}\label{eqn:boundOnGradDesc}
\nonumber\Vert \hat{W}(j\eta) - \eta \nabla f(\hat{W}(j\eta))\Vert^{\lambda} &= \Vert \hat{W}(j\eta) - \eta \nabla f(\hat{W}(j\eta))\Vert^{2\times\frac{\lambda}{2}}\\
\nonumber &=\Big(\Vert \hat{W}(j\eta)\Vert^2 - 2\eta \langle \hat{W}(j\eta),\nabla f(\hat{W}(j\eta)\rangle +\eta^2\Vert \nabla f(\hat{W}(j\eta)\Vert^2\Big)^{\frac{\lambda}{2}}\\
& \leq\Big(\Vert \hat{W}(j\eta)\Vert^2 - 2\eta(m\Vert \hat{W}(j\eta)\Vert^{1+\gamma}-b) +\eta^2(2M^2\Vert \hat{W}(j\eta)\Vert^{2\gamma}+2B^2) \Big)^{\frac{\lambda}{2}},
\end{align}
where we have used assumption \textbf{A}\ref{assump:dissipative} and Lemma \ref{lemma:gradientBound}. For $0<\eta\leq\frac{m}{M^2}$,
\begin{align*}
2\eta m(\Vert \hat{W}(j\eta)\Vert^{1+\gamma}+1)\geq 2\eta^2 M^2\Vert \hat{W}(j\eta)\Vert^{2\gamma}.&& \text{(since $1+\gamma>2\gamma$ and $\eta m>\eta^2 M^2$)}
\end{align*}

Using this inequality we have
\begin{align}\label{eqn:boundOnGradDesc2}
\nonumber\Vert \hat{W}(j\eta) - \eta \nabla f(\hat{W}(j\eta))\Vert^{\lambda} &\leq\Big(\Vert \hat{W}(j\eta)\Vert^2 + 2\eta (b+m) +2\eta^2 B^2 \Big)^{\frac{\lambda}{2}}\\
&\leq\Vert \hat{W}(j\eta)\Vert^{\lambda} + (2\eta (b+m))^{\frac{\lambda}{2}} +2^{\frac{\lambda}{2}}(\eta B)^{\lambda}.
\end{align}

Consider the case where $\lambda>1$. By \eqref{eqn:lambdaGreatOne} and \eqref{eqn:boundOnGradDesc2},
\begin{align*}
\Big(\mathbb{E}\Vert \hat{W}((j+1)\eta)\Vert^\lambda\Big)^{\frac{1}{\lambda}} &\leq\Big(\mathbb{E}\Vert \hat{W}(j\eta)\Vert^{\lambda} + (2\eta (b+m))^{\frac{\lambda}{2}} +2^{\frac{\lambda}{2}}(\eta B)^{\lambda}\Big)^{\frac{1}{\lambda}} +\varepsilon\eta^{\frac{1}{\alpha}}\Big(\mathbb{E}\Vert L^{\alpha}(1)\Vert^{\lambda}\Big)^{\frac{1}{\lambda}} + \varepsilon\eta^{\frac{1}{2}}\Big(\mathbb{E}\Vert B(1)\Vert^{\lambda}\Big)^{\frac{1}{\lambda}}\\
&\leq\Big(\mathbb{E}\Vert \hat{W}(j\eta)\Vert^{\lambda}\Big)^{\frac{1}{\lambda}} + (2\eta (b+m))^{\frac{1}{2}} +2^{\frac{1}{2}}\eta B + \varepsilon\eta^{\frac{1}{\alpha}}l_{\alpha,\lambda,d}^{\frac{1}{\lambda}} + \varepsilon\eta^{\frac{1}{2}}b_{\lambda,d}^{\frac{1}{\lambda}}\\
&\leq\Big(\mathbb{E}\Vert \hat{W}(0)\Vert^{\lambda}\Big)^{\frac{1}{\lambda}} + (j+1)\Big((2\eta (b+m))^{\frac{1}{2}} +2^{\frac{1}{2}}\eta B + \varepsilon\eta^{\frac{1}{\alpha}}l_{\alpha,\lambda,d}^{\frac{1}{\lambda}} + \varepsilon\eta^{\frac{1}{2}}b_{\lambda,d}^{\frac{1}{\lambda}}\Big).
\end{align*}

For the case where $0\leq\lambda\leq 1$, by \eqref{eqn:lambdaLessOne} and \eqref{eqn:boundOnGradDesc2},
\begin{align*}
\mathbb{E}\Vert \hat{W}((j+1)\eta)\Vert^\lambda &\leq\mathbb{E}\Vert \hat{W}(j\eta)\Vert^{\lambda} + (2\eta (b+m))^{\frac{\lambda}{2}} +2^{\frac{\lambda}{2}}(\eta B)^{\lambda} + \varepsilon^{\lambda}\eta^{\frac{\lambda}{\alpha}}l_{\alpha,\lambda,d} + \varepsilon^{\lambda}\eta^{\frac{\lambda}{2}}b_{\lambda,d} \\
&\leq\mathbb{E}\Vert \hat{W}(0)\Vert^{\lambda} + (j+1)\Big((2\eta (b+m))^{\frac{\lambda}{2}} +2^{\frac{\lambda}{2}}(\eta B)^{\lambda} + \varepsilon^{\lambda}\eta^{\frac{\lambda}{\alpha}}l_{\alpha,\lambda,d} + \varepsilon^{\lambda}\eta^{\frac{\lambda}{2}}b_{\lambda,d}\Big).
\end{align*}

By using Lemma~\ref{lemma:momentsOfStableDist} and Corollary~\ref{cor:momentsOfGaussian}, we have the desired results.

\end{proof}

\subsection{Details of the Simulations}


We run the experiments for different values of the other parameters of the problem. The detailed settings of the parameters are as follows.

Figure~\ref{fig:experiments}(a) $d = 10$, $\alpha\in\{1.2,1.4,1.6,1.8\}$, $\varepsilon = 0.1$, $\sigma = 1$, $a = 4\times 10^{-4}$.




Figure~\ref{fig:experiments}(b) $d = 10$, $\alpha\in\{1.2,1.4,1.6,1.8\}$, $\varepsilon\in\{10^{-3},10^{-2},10^{-1},10\}$, $\sigma=1$, $a = 4\times 10^{-6}$.

Figure~\ref{fig:experiments}(c) $d = 10$, $\alpha\in\{1.2,1.4,1.6,1.8\}$, $\varepsilon = 0.1$, $\sigma\in\{10^{-2},10^{-1},1,10\}$, $a = 4\times 10^{-5}$.

Figure~\ref{fig:experiments}(d) $d\in\{10,40,70,100\}$, $\alpha\in\{1.2,1.4,1.6,1.8\}$, $\varepsilon = 0.1$, $\sigma=1$, $a = 4\times 10^{-4}$.



\end{document}